\newtheorem{defi}{Definition}
\newtheorem{theorem}{Theorem}[section]
\newtheorem{lemma}[theorem]{Lemma}
\newtheorem{proposition}[theorem]{Proposition}
\newcommand{\triangulation}{\mathcal{T}}
\newcommand{\dualgraph}{\triangulation^*}
\newcommand{\dualv}{V^*}
\newcommand{\duale}{E^*}
\newcommand{\calV}{\mathcal V_\tau}
\newcommand{\calT}{\mathcal T_\tau}
\newcommand{\calP}{\mathcal P_\tau}
\newcommand{\calTstar}{\mathcal T^*_\tau}
\newcommand{\reals}{\mathbb R}
\newcommand{\dthresh}{D_{thresh}}
\newcommand{\position}{p^\tau_i}
\newcommand{\velocity}{v^\tau_i}
\newcommand{\nt}{n_t}
\newcommand{\ntau}{n_\tau}
\newcommand{\bff}{\mathbf{f}}
\mathchardef\mhyphen="2D
\title{\LARGE \bf
Dynamic Channel: A Planning Framework for Crowd Navigation
}
\author{Chao Cao$^{1}$, Peter Trautman$^{2}$ and Soshi Iba$^{2}$
\thanks{$^{1}$Chao Cao is with Robotics Institute,
        Carnegie Mellon University, Pittsburgh, PA 15213, USA
        {\tt\small ccao1@andrew.cmu.edu}}%
\thanks{$^{2}$Peter Trautman and Soshi Iba with Honda Research Institute,
        Mountain View, CA 94043, USA
        {\tt\small \{ptrautman, siba\}@honda-ri.com}}%
}
\begin{document}

\maketitle
\thispagestyle{empty}
\pagestyle{empty}

\begin{abstract}
Real-time navigation in dense human environments is a challenging problem in robotics. Most existing path planners fail to account for the dynamics of pedestrians because introducing time as an additional dimension in search space is computationally prohibitive. Alternatively, most local motion planners only address imminent collision avoidance and fail to offer long-term optimality. In this work, we present an approach, called Dynamic Channels, to solve this global to local quandary. Our method combines the high-level topological path planning with low-level motion planning into a complete pipeline. By formulating the path planning problem as graph-searching in the triangulation space, our planner is able to explicitly reason about the obstacle dynamics and capture the environmental change efficiently. We evaluate efficiency and performance of our approach on public pedestrian datasets and compare it to a state-of-the-art planning algorithm for dynamic obstacle avoidance.  Completeness proofs are provided in the supplement at \href{http://caochao.me/files/proof.pdf}{http://caochao.me/files/proof.pdf}.  An extended version of the paper is available on arXiv.
\end{abstract}

\section{Introduction}
\noindent Autonomously navigating through crowded human environments is a cornerstone functionality for many mobile robot applications  (e.g., service, safety, and logistic robotics).  However, crowd navigation requires safe operation of the robot in close proximity to agile pedestrians: a competent crowd planner needs to efficiently generate feasible, optimal, and even socially compliant trajectories.  Most existing approaches focus on either global or local optimality.  For instance, sampling based motion planners often assume a static environment because adding a time dimension exponentially increases computation.  Search-based, optimization-based and geometry-based planning algorithms are subject to the same computational shortcoming.  Alternatively, local planners attend to agent dynamics by seeking collision-free paths over short time horizons but neglect global optimality.
\begin{figure}[!t]
	\centering
	\subfloat[]{\includegraphics[width=0.5\linewidth]{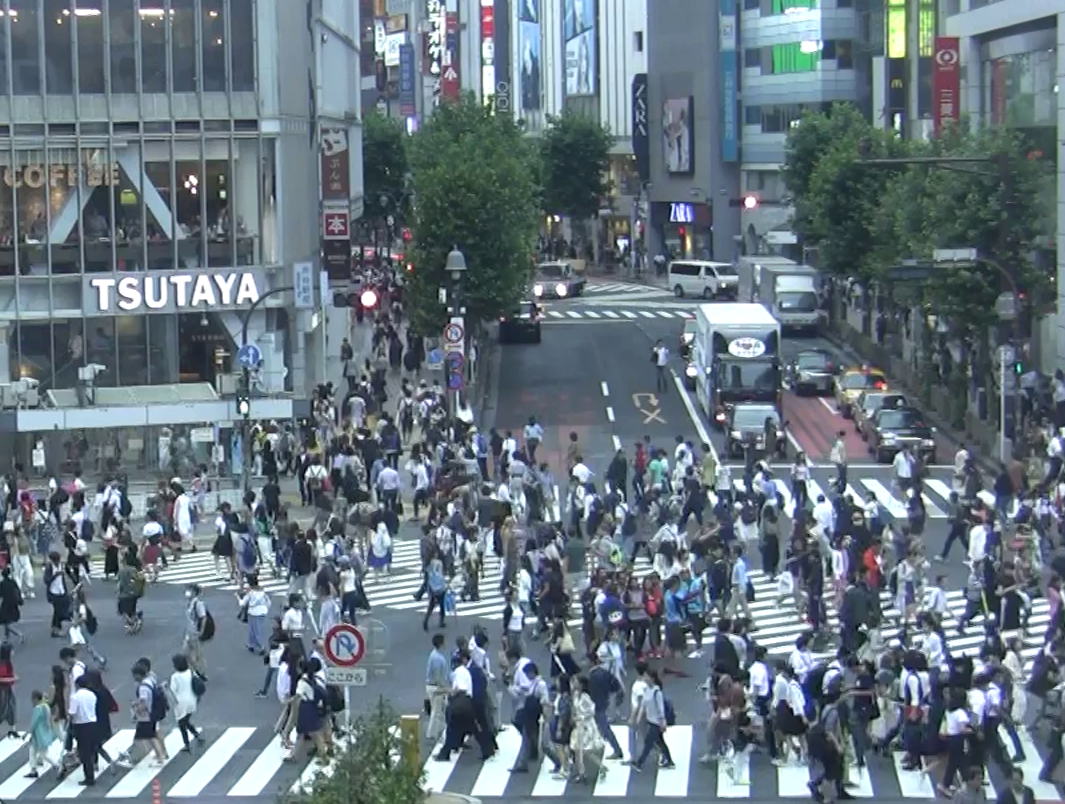}\label{fig:crowds}}
	\subfloat[]{\includegraphics[width=0.5\linewidth]{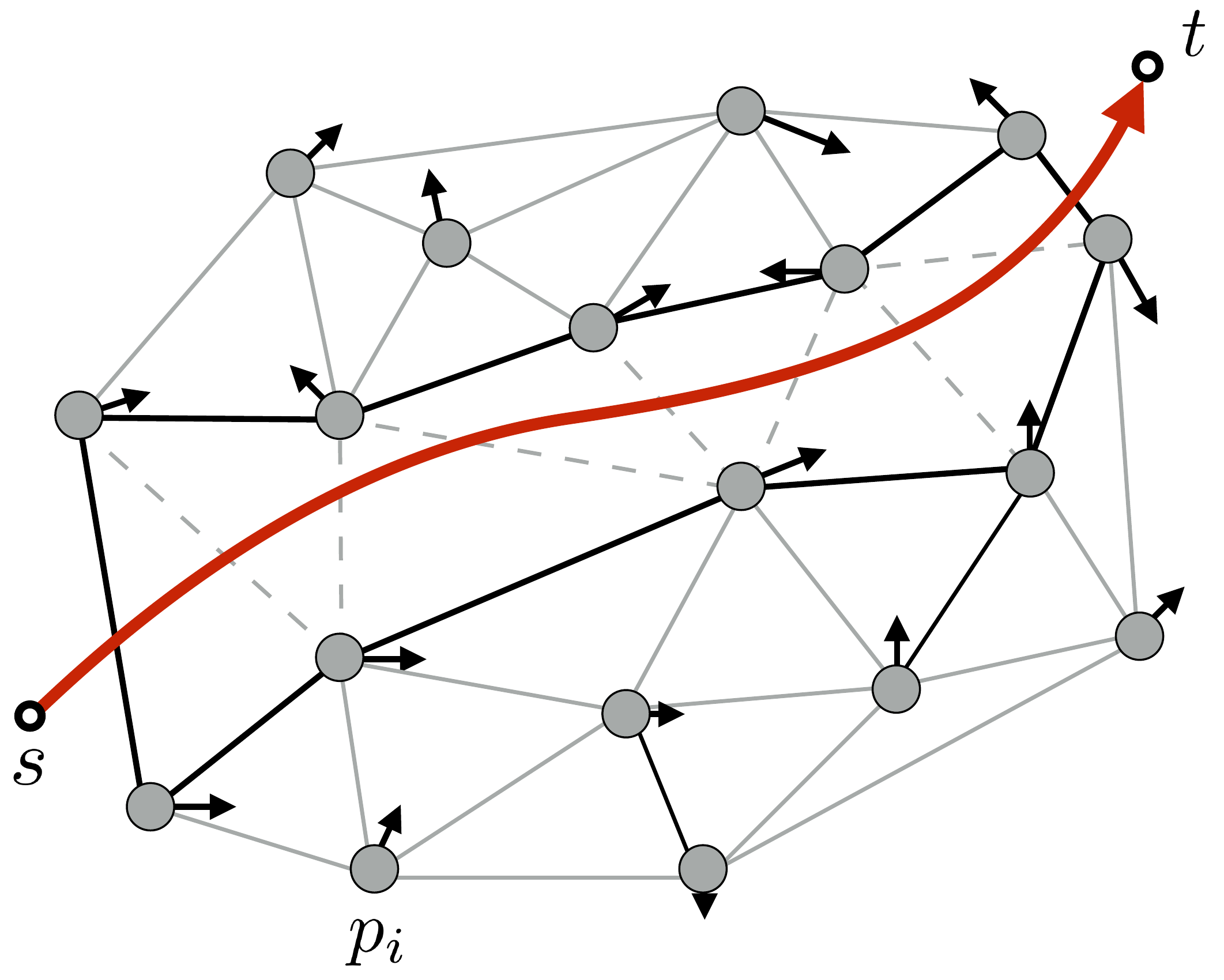}\label{fig:abstraction}}
	\caption{(a) A crowded crosswalk in Japan, among which the navigation of a robot will be challenging without an efficient planner. (b) The environment abstraction used in our framework. Pedestrians are shown as circles with arrows indicating the moving direction and speed. The "Dynamic Channel" and a safe trajectory inside are shown by bold line segments and the red curve with an arrow.}
	\label{fig:crowds_and_abstraction}
	\vspace{-15pt}
\end{figure}

We introduce \emph{dynamic channels}, a novel crowd navigation architecture combining global optimality and a heuristic for efficiently managing agent dynamics. Our method abstracts the environment using Delaunay Triangulation and then projects obstacle dynamics onto the triangulation.  A modified A* algorithm then computes the optimal path.  Empirically, performance and efficiency are validated with thousands of real world pedestrian datasets.  Our approach outperforms state of the art methods by a significant margin for task completion while remaining real-time computable even for large numbers of dynamic obstacles.

\subsection{Local and Global Planning}
\emph{Local planning} seeks to optimize near-term objective functions; classic approaches include the Dynamic Window Approach~\cite{dynamic-window}, Inevitable Collision States~\cite{ics}, and Velocity Obstacles \cite{vos}.  Variants of Velocity Obstacles~\cite{rvo,VanDenBerg2010,snape-hrvo,VanDenBerg2011} solve the multi-agent system problem for reciprocal collision avoidance by assuming that each agent adopts the same navigation strategy (e.g., each agent passes to the right).  However, human behavior is probabilistic, and so assuming fixed actions can lead to unsafe robot behaviors. However, even with guaranteed collision-free local actions, local planning is insufficient, since such trajectories may not guarantee e.g. the shortest traveling distance.

\emph{Global planning} searches for feasible paths connecting initial and goal states of the robot. Most traditional sampling based algorithms including Rapidly-exploring Random Tree (RRT)~\cite{Kuffner2000} and Probabilistic roadmap methods (PRM)~\cite{Kavraki1996,Kavraki1998} assume the environment to be static when planning; replanning is employed to account for dynamics. To reduce unnecessary computation~\cite{Zucker2007} proposed a dynamic RRT variant by reusing branches from previous search trees. In~\cite{Connell2017} a replanning scheme based on RRT*~\cite{rrt*} dealt with unknown dynamic obstacles blocking the robot's path.

Many global planners decouple static and dynamic obstacles.  An intuitive strategy relies on a local reactive collision avoidance system to deal with local disturbances. Separating static and dynamic obstacles can also be achieved by hierarchical planning and the combination of potential fields: ~\cite{JurP.vandenBergandMarkH.Overmars2005,VanDenBerg2007} decouples high and low level planning with a two level-search.  Globally, a roadmap is built upon static obstacles while locally a collision free trajectory is obtained. In~\cite{svenstrup-human}, a dynamic pedestrian potential field selects the traversal and safety optimal trajectory.  The same problem also exists in other planning methods.  For Elastic Band~\cite{Quinlan1993}, Timed Elastic Band~\cite{Rosmann2013} and spline-based planners~\cite{Lau2009}, the initial guess determines path partition, which is based on the static assumption.  Ultimately, treating static and dynamic obstacles separately has the advantage of being efficient and probabilistically complete. However, sub-optimal paths can result because obstacle dynamics affects global optimality.
\begin{figure*}[!h]
\centering
\subfloat[]{\includegraphics[width=0.25\linewidth]{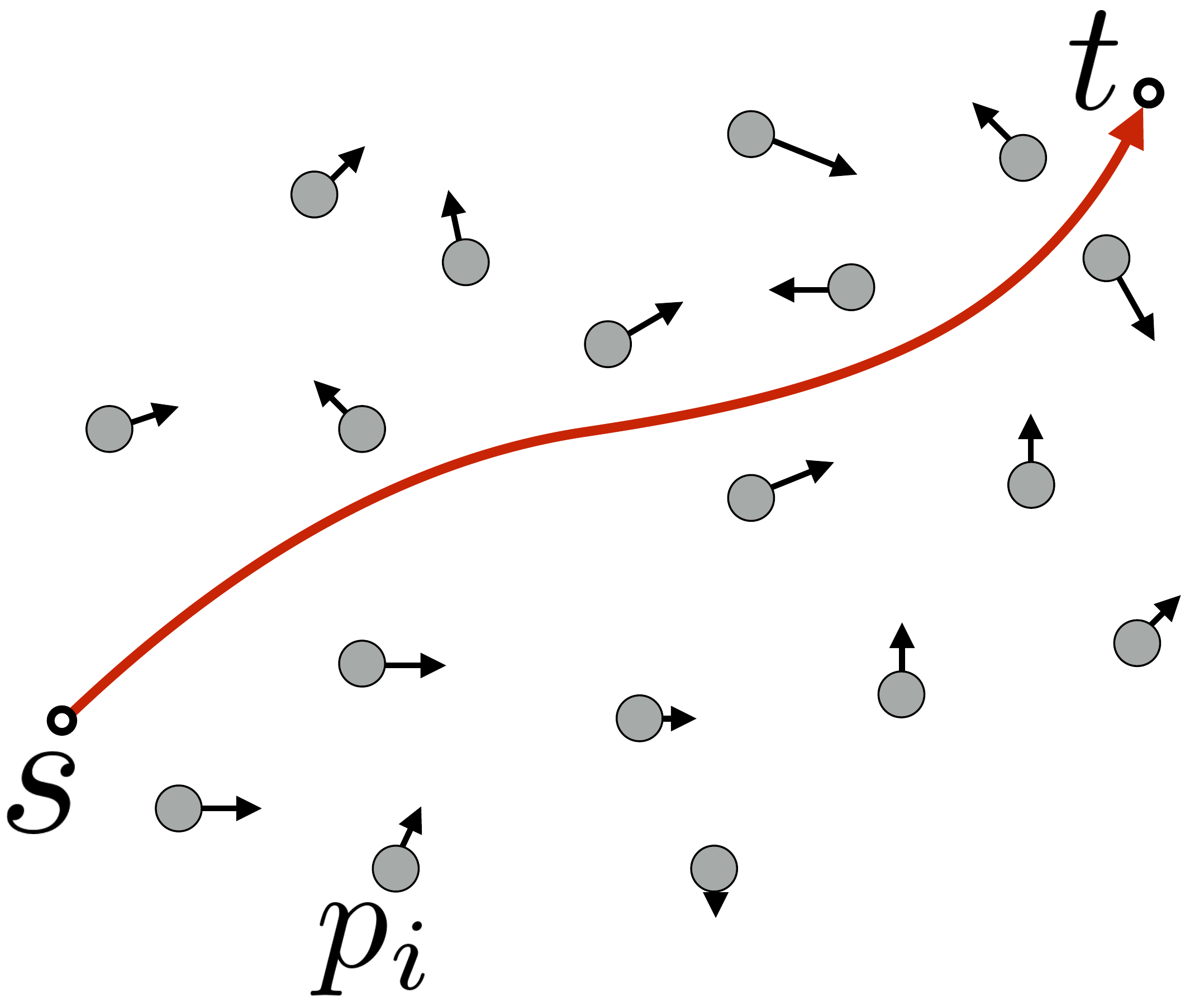}\label{fig:path}}
\subfloat[]{\includegraphics[width=0.25\linewidth]{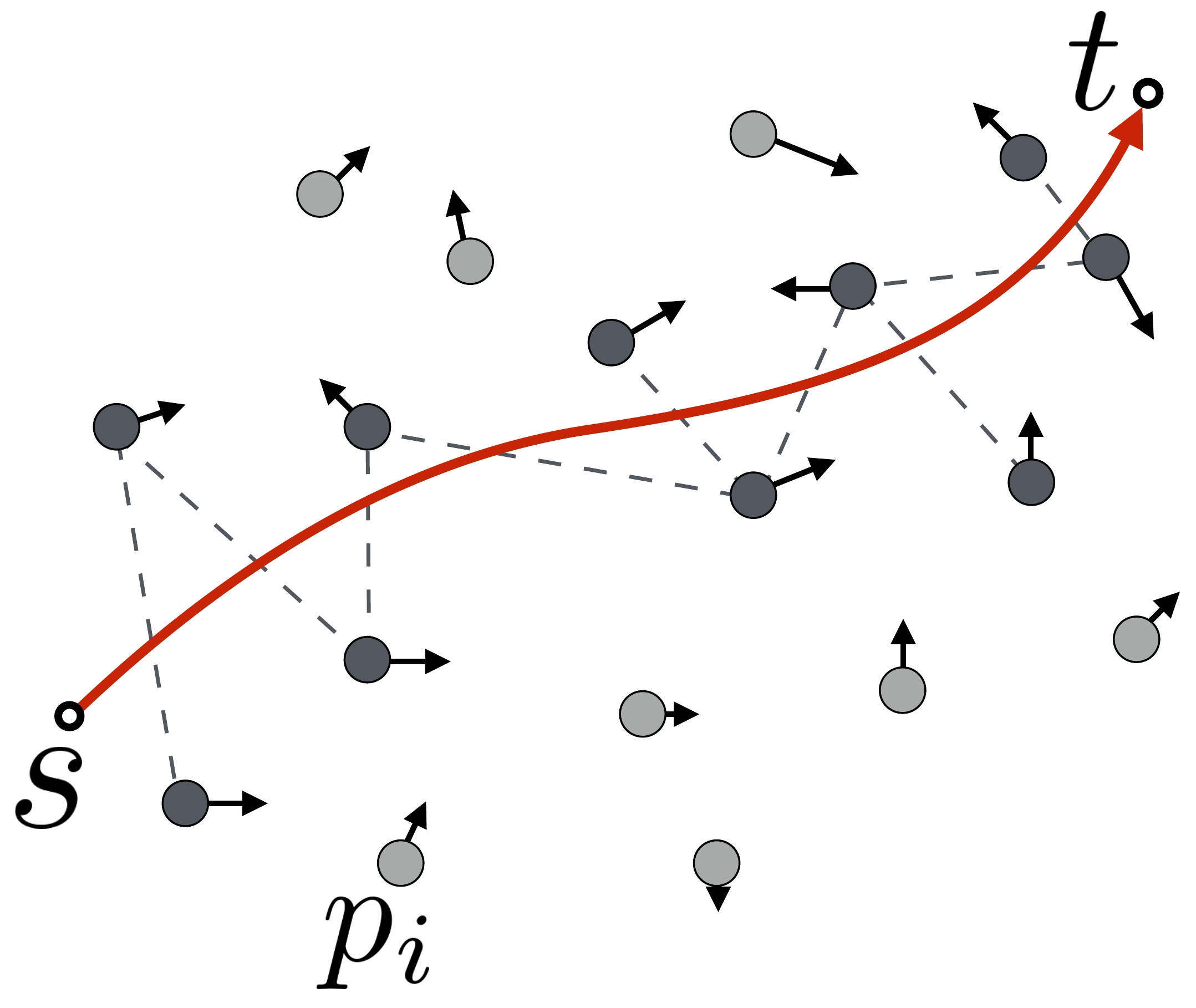}\label{fig:neighbors}}
\subfloat[]{\includegraphics[width=0.25\linewidth]{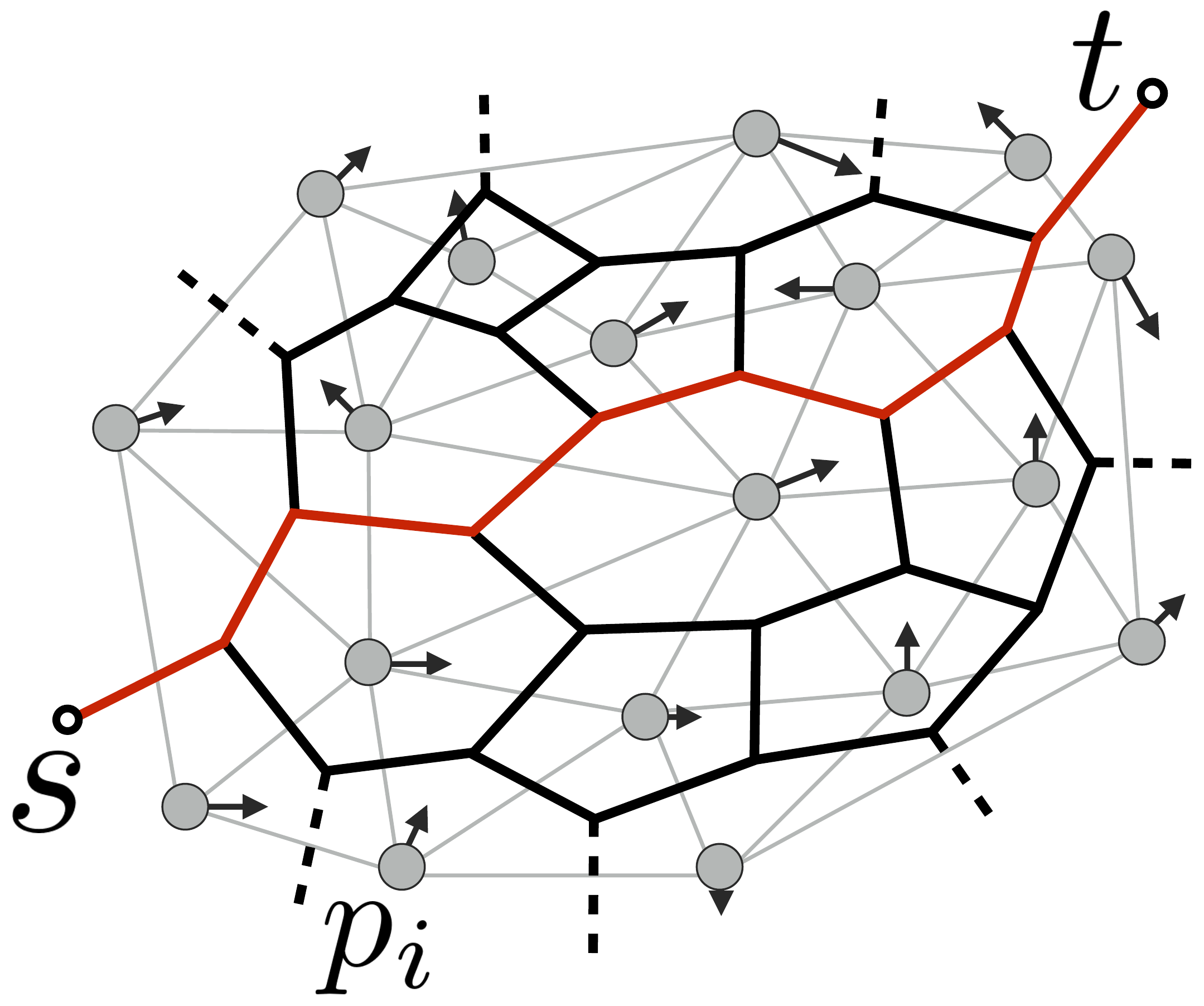}\label{fig:dual_graph_path}}
\subfloat[]{\includegraphics[width=0.25\linewidth]{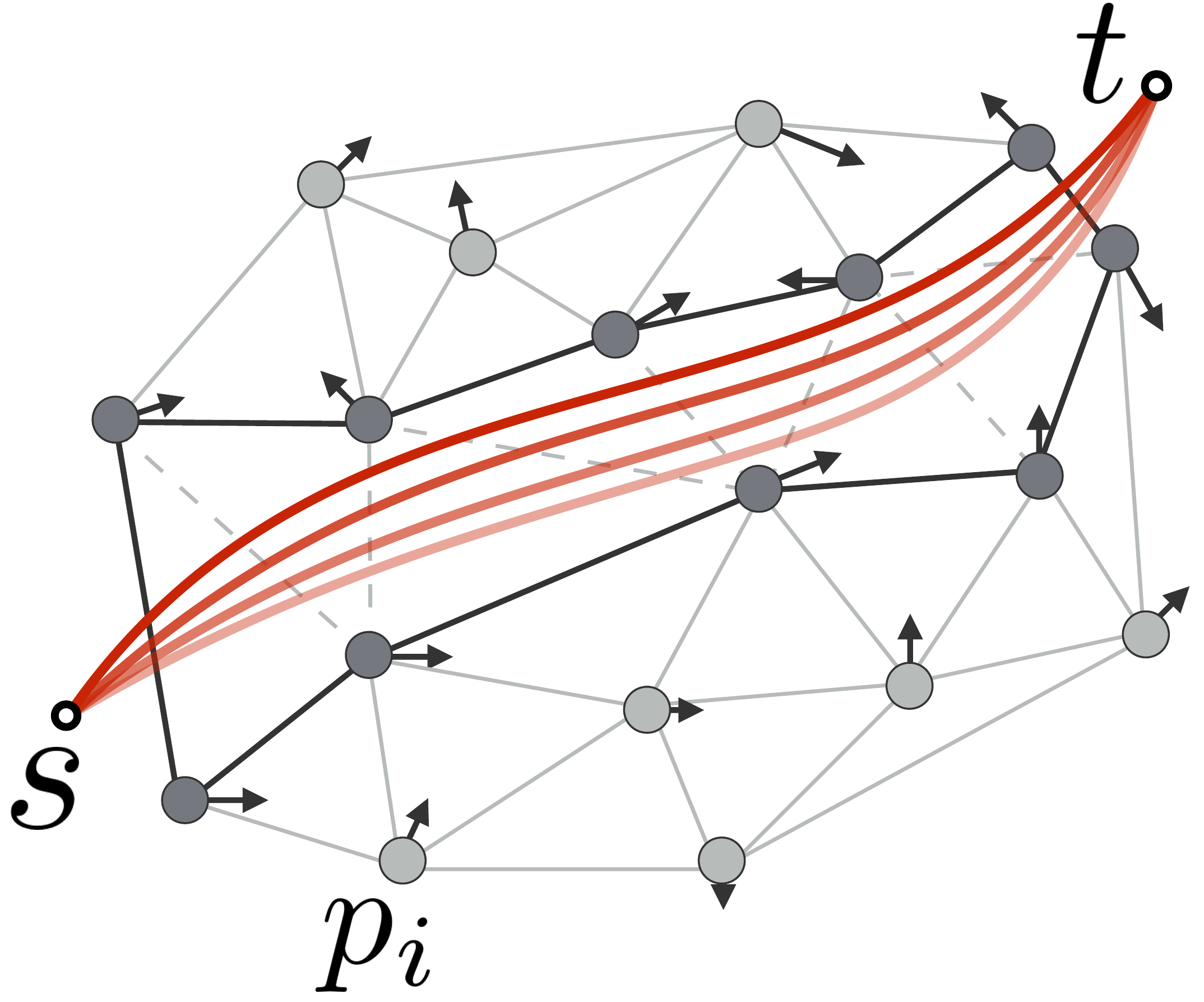}\label{fig:channel}}
\caption{(a) A path from $s$ to $t$ planned through the crowd. Pedestrians are shown as grey circles with arrows denoting velocities. (b) The path sequentially passes through pairs of pedestrians shown as darker circles connected by dashed line segments. (c) The dual graph $\mathcal{T}^*$ of the triangulation of $\mathcal{P}$. A path connecting $s$ and $t$ is shown in red. (d) A "channel" (bold line segments) and the corresponding homotopy class of paths (red curves) bounded within. } 
\label{fig:paths}
\vspace{-15pt}
\end{figure*}
An important class of planning methods leverages computational geometry to gain efficiency; Voronoi diagrams and Delaunay Triangulation have long been used for environment abstraction. With Voronoi diagrams, the decomposition of free space based on Voronoi cells efficiently generates homotopically distinct paths~\cite{Choset2000,Garber2004,Banerjee2013,Kuderer2014}. Delaunay Triangulation has also been used in graphics to solve the virtual character navigation~\cite{Kallmann2005,Broz2014}.  These data structures have important complexity properites in 2D planar environments: compared to grid-based maps, computational geometric approaches search a much smaller graph while faithfully capturing topological properties like connectivity. In~\cite{Demyen2007} Triangulation A* (TA*) and Triangulation Reduction A* (TRA*) is introduced for pathfinding in the triangulation space. Similarly,~\cite{Yan2008} presents a path planning approach in Constrained Delaunay Triangulation (CDT) to account for static polygon obstacles. Finally~\cite{Chen2010} uses dynamic Delaunay triangulation (DDT) to solve the problem navigation in the absence of a pre-specified goal position.

However, these algorithms do not reason about obstacle dynamics, and naively adding a time dimension results in exponential computation.  Our approach introduces a novel approach to incorporate obstacle dynamics in the global planning step and extends the pathfinding algorithm to efficiently analyze environmental evolution.
\vspace{-5pt}
\subsection{Human Aware Navigation}
\emph{Human aware navigation}~\cite{crowd-nav-survey} focuses on  socially acceptable robot behaviors rather than explicitly solving the collision avoidance problem. In~\cite{Trautman2010} cooperative collision avoidance is leveraged to develop a tractable interacting Gaussian Process (IGP) model. A socially-inspired crowd navigation approach is developed in~\cite{Muller2008} where the robot follows pedestrians moving towards its goal. However, this method might fail in situations where the major flow of pedestrians is not in the robot's goal direction. Other work leverages learning to model and replicate socially compliant behaviors for crowd navigation~\cite{kudereriros2013,learning-with-braids,learning-social-robot,irlnavigate,crowd-nav-deep-learning-mit}

In this work, we do not explicitly address socially acceptable navigation. However, our framework can be extended to incorporate social norms. For example, in the graph construction step (described below), the interaction between people can be encoded in the graph is obtainable (one person taking pictures for the other, a group of friends walking together) so that edges between pedestrians can be denoted as non-crossable for the robot.  Moreover, the path generated by our planner naturally captures some social norms. For example, when passing a pedestrian, our planner will favor passing from behind rather than in front of the pedestrian, because the former leads to a shorter traveling distance.

\section{Methodology}
\noindent Let the crowd positions at time $\tau$ be denoted as $\calP = \{\position \mid \position\in\reals^2, i=1,\ldots,\nt  \}$, where $\ntau$ is the number of pedestrians.  Let $\calV = \{\velocity \mid \velocity\in\reals^2, i=1,\ldots,\nt  \}$ be pedestrian velocities at $\tau$.  Let the robot starting point be $s \in \reals^2$ and goal $t\in\reals^2$. We seek the collision free trajectory that is optimal with respect to travel time and distance.
\subsection{Delaunay Triangulation}
Our first objective is to plan a path through the obstacles $\calP$ at the topological level. This is achieved by Delaunay triangulation on $\calP$, which produces the graph $\calT$ (light gray graph in Fig.~\ref{fig:dual_graph_path}; $\position$ are vertices).  The dual graph $\calTstar$ (black graph in Fig.~\ref{fig:dual_graph_path}) has one vertex for each face of $\calT$ and one edge for face pairs separated by an edge in $\calT$.  If we define homotopic equivalence as paths that can be continuously deformed into each other without passing through vertices $\position$, then $\reals^2$-paths and $\calTstar$-graphs are equivalent (Fig~\ref{fig:dual_graph_path},~\ref{fig:channel}):
\begin{theorem}
  Any path through $\calP$ in $\reals^2$ uniquely determines a path in $\calTstar$.
\end{theorem}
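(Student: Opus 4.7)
The plan is to construct the dual-graph walk by tracking which face of $\calT$ contains the path $\gamma:[0,1]\to\reals^2$ at each time, and then to verify that this assignment is well-defined and determined uniquely by $\gamma$. I will interpret ``path through $\calP$'' to mean a continuous curve in the plane that avoids the vertex set $\calP$ and lies within the convex hull of $\calP$, so that every point of $\gamma$ is contained in some face of $\calT$.

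First, I would show that $[0,1]$ decomposes into an ordered family of maximal subintervals on each of which $\gamma$ lies in the closure of a single face. For each $\tau$, the point $\gamma(\tau)$ belongs to either the interior of a unique face or the interior of an edge shared by exactly two faces (vertices are excluded by hypothesis). By continuity, the preimages of open faces are open in $[0,1]$, and their connected components supply the intervals. When $\gamma$ crosses from one interval to the next it leaves a face through a boundary edge and enters an adjacent face, so the corresponding pair of dual vertices is connected by an edge in $\calTstar$. The induced face-sequence is therefore a walk in $\calTstar$, establishing existence.

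The main obstacle will be ruling out pathological behavior of $\gamma$ near the 1-skeleton of $\calT$. Three cases must be addressed: (i) $\gamma$ can touch an edge of $\calT$ tangentially and retreat into the same face, in which case no dual transition should be registered; (ii) $\gamma$ can travel along an edge of $\calT$ for a nondegenerate interval, so the face assignment during that interval is ambiguous; and (iii) without a regularity assumption on $\gamma$, the number of edge crossings could in principle be infinite. I would handle (i) by declaring a transition only when $\gamma$ actually exits one open face and enters the other; (ii) by observing that either of the two incident faces yields a consistent walk in $\calTstar$; and (iii) by appealing to the mild regularity implicit in the paper, where robot trajectories are piecewise smooth with transversal edge crossings. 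Uniqueness is then immediate: the open face containing $\gamma(\tau)$ is determined by $\gamma$ alone at every non-boundary time, so the walk it induces in $\calTstar$ is unique.
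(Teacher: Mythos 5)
Your proof is correct under the regularity assumptions you state, but it takes a genuinely different route from the paper's. You construct the walk directly by tracking the open face of $\calT$ containing $\gamma(\tau)$, decomposing $[0,1]$ into the connected components of the preimages of open faces; adjacency of consecutive faces then supplies the dual edges. The paper instead enumerates the (finitely many) intersection points of the path with the edges of $\calT$ and, before reading off the walk, performs a parity reduction: any $m$ consecutive intersections with the same edge are cancelled down to $m \bmod 2$ of them, justified by the observation that two consecutive crossings of a single edge bound a region containing no vertex of $\calP$ and can therefore be homotoped away. The two constructions do not produce the same object in general: if the path crosses an edge into an adjacent face and immediately crosses back, your walk records that excursion while the paper's reduced walk deletes it. Your version buys directness --- no reduction lemma is needed and the walk is read off pointwise, so uniqueness is essentially tautological --- whereas the paper's version yields a walk that is invariant under these local back-and-forth deformations, which is closer to what the subsequent results (relating loopless walks to homotopy classes of paths) actually rely on. Two caveats: in your case (ii), where $\gamma$ runs along an edge of $\calT$ for a nondegenerate interval, the two incident faces give genuinely different walks (one stays in a face, the other inserts a detour through the neighbor), so uniqueness fails there unless you fix a convention rather than merely noting that either choice is ``consistent''; and both proofs share the implicit assumption of finitely many, transversal edge crossings, so your case (iii) is not a point of difference from the paper.
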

\begin{theorem}\label{theorem:channel_to_homotopy}
A path on $\calTstar$ without repeating vertices corresponds to one homotopy class of paths in $\reals^2$
\end{theorem}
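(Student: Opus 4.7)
Proof plan: The plan is to establish the correspondence between dual paths $\pi = (T_1, \ldots, T_k)$ with distinct vertices and a single homotopy class in $\reals^2 \setminus \calP$ by showing: (i) every such dual path has at least one realizing $\reals^2$-path, and (ii) any two realizations are homotopic rel endpoints in $\reals^2 \setminus \calP$.

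First, I would formalize the notion of a realization. A curve $\gamma : [0,1] \to \reals^2 \setminus \calP$ realizes $\pi$ if there is a partition $0 = t_0 < t_1 < \cdots < t_{k-1} < t_k = 1$ with $\gamma([t_{i-1}, t_i]) \subseteq T_i$ and $\gamma(t_i)$ lying in the relative interior of the shared Delaunay edge $e_i := T_i \cap T_{i+1}$. Since each $T_i$ is convex and the shared edges are nonempty, a canonical realization $\gamma_0$ exists by linearly interpolating between the centroid of $T_1$, the midpoint of $e_1$, the centroid of $T_2$, the midpoint of $e_2$, and so on through the centroid of $T_k$; this settles existence.

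Next, given any realization $\gamma$, I would homotope $\gamma$ to $\gamma_0$ in two stages. In the first stage, slide each crossing point $\gamma(t_i)$ along the relative interior of $e_i$ to its midpoint; this keeps the curve in $\reals^2 \setminus \calP$ because the only points of $\calP$ on $e_i$ are its two endpoints. In the second stage, on each interval $[t_{i-1}, t_i]$ whose image lies entirely in the convex region $T_i$, apply a straight-line homotopy to the corresponding segment of $\gamma_0$. Convexity keeps intermediate curves inside $T_i$, and since $T_i \cap \calP$ consists only of its three vertices, all disjoint from the relative interiors of $e_{i-1}$ and $e_i$, the homotopy never meets $\calP$. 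Concatenating these local homotopies yields a global homotopy $\gamma \simeq \gamma_0$ rel endpoints; transitively, any two realizations of $\pi$ are homotopic.

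The main obstacle is ruling out that the per-triangle straight-line homotopy sweeps across a vertex of $T_i$, which is a point of $\calP$. This is where the no-repeated-vertex hypothesis does its essential work. Without it, two sub-arcs of $\gamma$ could occupy the same triangle with different entry/exit edges, and the linear sweep between them could cross a Delaunay vertex, altering the winding number around that vertex and placing the two realizations in different classes. Once each triangle is visited at most once, the portion of $\gamma$ in $T_i$ is a single arc whose endpoints lie in the interiors of two disjoint edges of $T_i$, so the convex-combination homotopy to the canonical segment cannot cross a vertex. The same hypothesis guarantees that in stage one each $e_i$ is crossed exactly once, making the sliding step well-defined.
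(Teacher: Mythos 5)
Your argument is correct in its essentials but takes a genuinely different route from the paper. The paper never builds an explicit homotopy inside the triangulation: it closes the vertex-disjoint dual walk into an $s\mhyphen t$ cycle through the outer face, invokes planar duality to identify that cycle with a cut of $\calT$, identifies the cut with a two-block partition of $\calP$, and then appeals to an algebraic separating-curve lemma plus the claim that a partition of $\calP$ determines a homotopy class. Your proof instead works directly with realizations of the triangle sequence: existence via the centroid/midpoint polyline, and uniqueness via edge-sliding followed by straight-line homotopies inside each convex triangle. What you gain is concreteness and locality --- the homotopy is exhibited rather than inferred, and you sidestep the paper's most fragile step (the level set $f(x,y)=0$ produced by its lemma need not be a connected simple arc from $s$ to $t$, so ``the partition determines the class'' is doing unexamined work there). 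What the paper's route buys is an invariant (the partition of $\calP$) that makes it cheap to argue that \emph{different} channels give \emph{different} classes; your construction only establishes that each dual path yields a single well-defined class, which is all the stated theorem requires.

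Two points to tighten. First, your diagnosis of where the no-repeated-vertex hypothesis is ``essential'' is off: a straight-line homotopy between two arcs contained in a closed triangle can never sweep through a vertex, because a vertex is an extreme point and so cannot lie in the relative interior of a segment joining two points of the triangle unless it coincides with one of them. Hence even a repeated triangle causes no vertex crossing once you match the sub-arcs of $\gamma$ to those of $\gamma_0$ in order of traversal; the hypothesis really serves to make the channel a simple polygon and the crossing sequence unambiguous, not to save the homotopy. Second, in the paper's setting $s$ and $t$ lie outside the convex hull, so the first and last ``faces'' of the walk are the unbounded outer face, which is neither a triangle nor convex: the centroid and the straight-line homotopy are undefined there, and two arcs from $s$ to the entry edge that stay in the closed outer face need not be homotopic in $\reals^2\setminus\calP$ (one may wind around the hull). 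You should either confine those terminal arcs to a convex neighborhood of the entry and exit edges or give a separate argument for the outer face; without that, the class is pinned down only for the portion of the path between the first and last edge crossings.
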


For brevity, we prove these theorems in a supplement.
\footnote{\href{http://caochao.me/files/proof.pdf}{http://caochao.me/files/proof.pdf}} \emph{Thus, instead of optimizing paths in $\reals^2$ we can equivalently search the graph $\calTstar$.  This graph search is computationally efficient  (Section~\ref{sec:experimental-comparison})}.  An important advantage of searching $\calTstar$ is that we can reason about pedestrian pair interactions, instead of individually checking collisions. Consider a path passing through a group of people as shown in Fig. \ref{fig:path}, and the directional curve showing the robot path. By projecting pedestrian velocities onto triangulation edges, triangulation time evolution is achieved. In this manner, pedestrian dynamics are analyzed as a network.

\subsection{Dynamic Channel}
A path on $\calTstar$ (Fig.~\ref{fig:dual_graph_path}) is equivalent to a triangulated simple polygon called a ``channel''~\cite{Kallmann2005} (Fig.~\ref{fig:channel}). We extend this concept to a ``dynamic channel.''   This dynamic channel is precisely the time evolution of our triangulation.

Pedestrian dynamics cause the dynamic channel to deform constantly. Of most concern is ``gate'' change (light gray dashed lines in Fig.~\ref{fig:channel}), the distance between pedestrian $i$ and $j$ at time $\tau$:
\begin{equation}\label{eq:1}
D^{ij}(\tau) = ||p^i_\tau - p^j_\tau||
\end{equation}
where $||.||$ denotes the Euclidean distance. Let $\dthresh = 2(r_{obs} + S_{safe})$ be a threshold for the width of a gate considered feasible for the robot to pass, where $r_{obs}$ is the radius of a pedestrian expanded by the radius of the robot, and $S_{safe}$ is the minimum safe distance between the robot and pedestrians. Solving for
\begin{equation}\label{eq:2}
D^{ij}(\tau) \geq \dthresh
\end{equation}
we obtain the time intervals $T_{feasible} = [\tau_1, \tau_2] \cup \dots\cup [\tau_{m-1}, \tau_{m}]$ when the gate is wide enough for the robot to cross. Given the Estimated Time of Arrival (ETA) for the robot to reach the gate $\tau_{ETA}$ (detailed in the next subsection), safe passage is guaranteed when $\tau_{ETA} \in T_{feasible}$. This condition must hold for all gates to guarantee safe passage.

We assume that pedestrian behaviors can be modeled linearly. Thus the trajectory (starting from the current frame $\tau=0$) of pedestrian $i$ can be described by:
\begin{equation}
 p^i(t) = p_0^i + v^i\tau
\end{equation}
where $p_0$ and $v^i$ are the current position and velocity. $D^{ij}(\tau)$ is then a quadratic function with respect to $\tau$ and the graph of $D^{ij}(\tau)$ is a parabola opening upward as in Fig. \ref{fig:curve}. Solving for Equation \ref{eq:2} gives zero, one or two solutions. When there is no solution, the minimum distance $D^{ij}_{min}(\tau)$ between pedestrians $i$ and $j$ will never be too close for the robot to cross. When there is only one solution, a critical moment exists when the robot can be just safe enough to cross. When there are two solutions, denoting $t_1$ and $t_2$ with $t_1 < t_2$, it is required that $t_{ETA} \in ([-\infty, t_1] \cup [t_2, +\infty]) \cap [0, +\infty]$ for safe passage as shown in Fig. \ref{fig:curve}.

\begin{figure}[b]
\vspace{-5pt}
\centering
\subfloat[]{\includegraphics[width=0.47\linewidth]{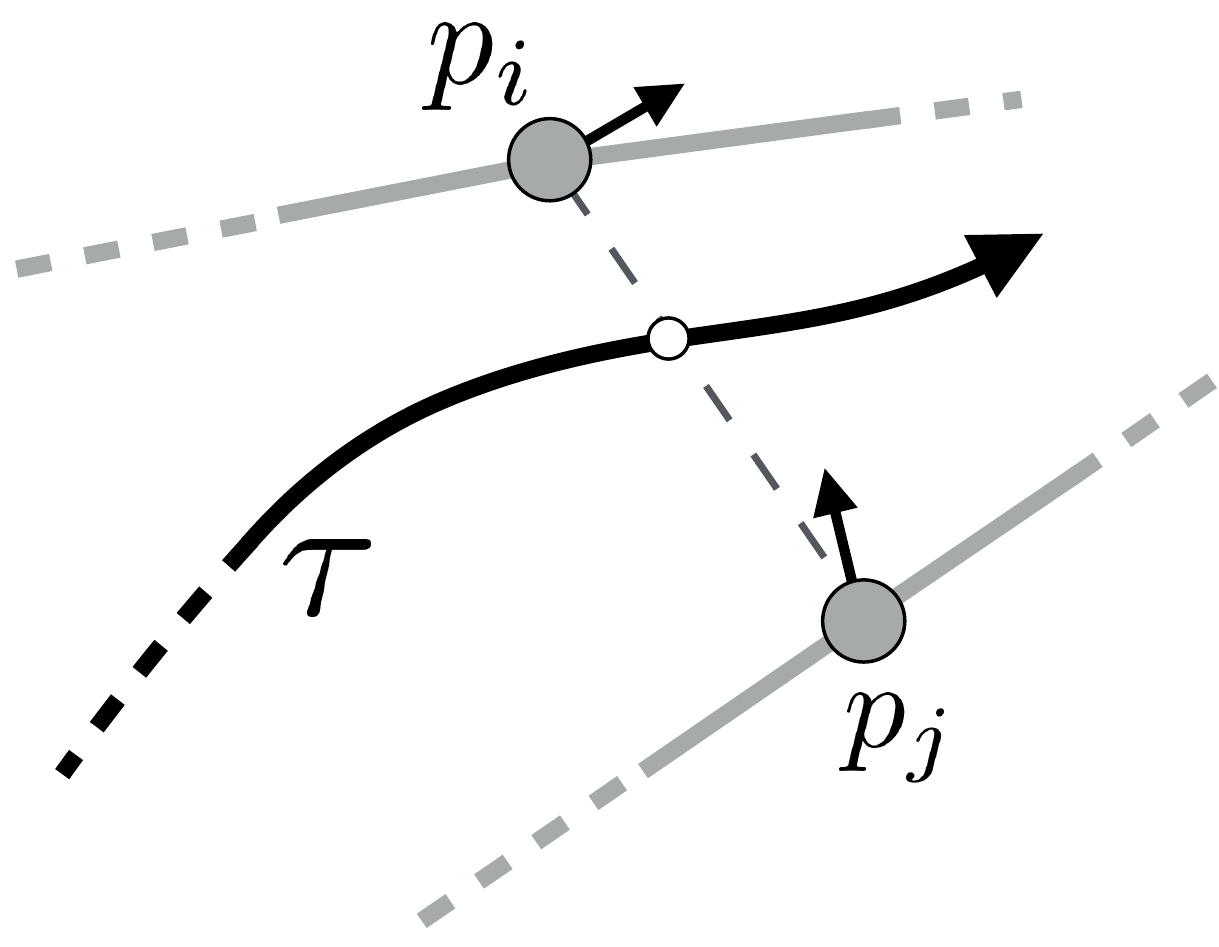}\label{fig:gate}}
~
\subfloat[]{\includegraphics[width=0.47\linewidth]{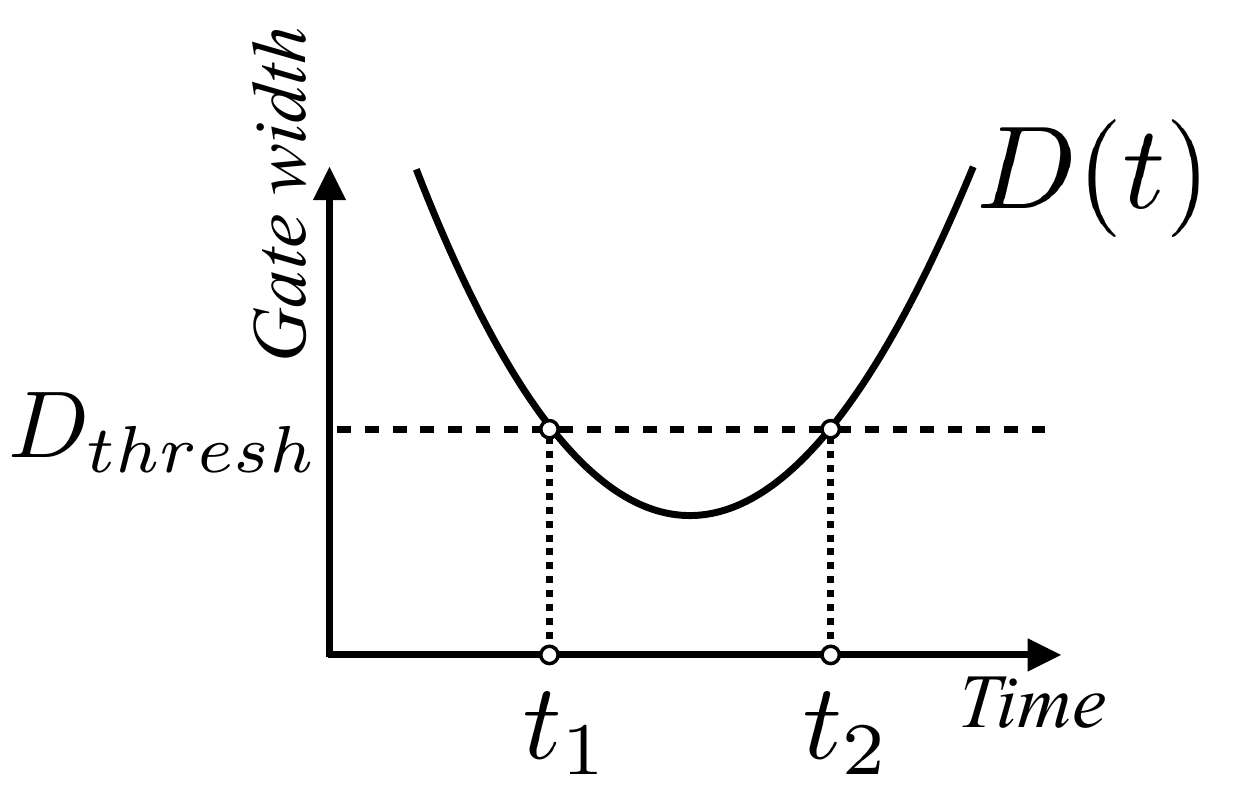}\label{fig:curve}}
\caption{(a) A gate formed by pedestrians $p_i$ and $p_j$. A path $\tau$ is planned through the gate. (b) The distance between two pedestrians changes with time as a parabola. $t_1$ and $t_2$ show the moments when the gate is  wide enough (above the threshold $\dthresh$) for safe passage.}
\label{fig:gate_dynamic}
\vspace{-0pt}
\end{figure}

\subsection{Timed A* Search}
A* algorithm has been used for searching for shortest paths in the triangulation space~\cite{Kallmann2005,Demyen2007}. Care needs to be taken for computing the \textit{g} and \textit{h} values. \cite{Yan2008} proposes the "target attractive principle" to determine the placements of nodes when building the dual graph on constrained Delaunay Triangulation. Here we adopt the rule-based method as in \cite{Yan2008} to determine the graph nodes. Note that in our work, the constraints are introduced by non-crossable edges that will lead to potential collision rather than static obstacles. For compactness, the detailed discussion on node placement is not given here.

Here, we extend the A* algorithm by incorporating robot and pedestrian dynamics. Within the triangulation framework, we focus on the interaction between connected pedestrian pairs. Thus the dynamics of the robot and pedestrians can be studied by projecting them along the edges in the triangulation. In particular, we compute the \textit{g} and \textit{h}-value with a lookahead time period. While searching, the velocity of a node representing a triangle can be determined by interpolating the velocities of three endpoints, which gives $g(\tau), h(\tau)$.  We then compute $\tau_{ETA}$ to determine $g(\tau_{ETA})$ and $h(\tau_{ETA})$. Similar to the estimate of \textit{g} value in a regular A* algorithm, we track path history while searching. For each node, the current path to the node is time parameterized by taking into account the kinematics and dynamics of the robot and curvature of the path. Note that due to the approximated node placements at searching time as in \cite{Yan2008} and the linear model assumption, $\tau_{ETA}$ can only be approximated accurately in a finite time horizon. Due to the time-variant $g(\tau)$ and $h(\tau)$ value, our planner will seek a candidate shortest path. For instance, when passing a walking pedestrian, our planner will passing behind rather than in front of, even though in front of is currently a shorter path.

\subsection{Path Optimization}
The output from our Timed A* algorithm is a path on $\calTstar$ consisting of a series of triangles that form a channel. We find the shortest path in the channel that ensures sufficient clearance. \textit{Funnel algorithm} \cite{Chazelle1982}, \cite{Lee1984}, \cite{Hershberger1991} can be used for computing a shortest path within a simple triangulated polygon connecting the start and goal points.  To account for non-zero radius obstacles, \cite{Demyen2007} proposed a modified \textit{funnel algorithm} by inserting conceptual circles at each vertex of the channel. The resulting path is composed of alternating straight-line segments and arcs. In our work, we further extend this method to account for the pedestrian dynamics. Given a pedestrian $\position$, we project the velocity $\velocity$ onto the vector $e_{ji} = p_j - p_i$, where $p_j$ is the other end of the edge in the channel. If the projection $v'_i > 0$, the path will pass by $\position$ in the front, potentially blocking $\position$'s moving direction. In this case, we assign a circle with the radius $r = k||v'_i||$ around $\position$ to create clearance, where $k$ is a tunable parameter. Similarly, if $v'_i < 0$, the path will pass from behind. Figure~\ref{fig:funnel} provides an illustration, where pedestrians $p_i$, $p_j$ and $p_k$ are assigned different sized circles based on their velocities.

\begin{figure}[!t]
\vspace{8pt}
\centering
\subfloat[]{\includegraphics[width=1.0\linewidth]{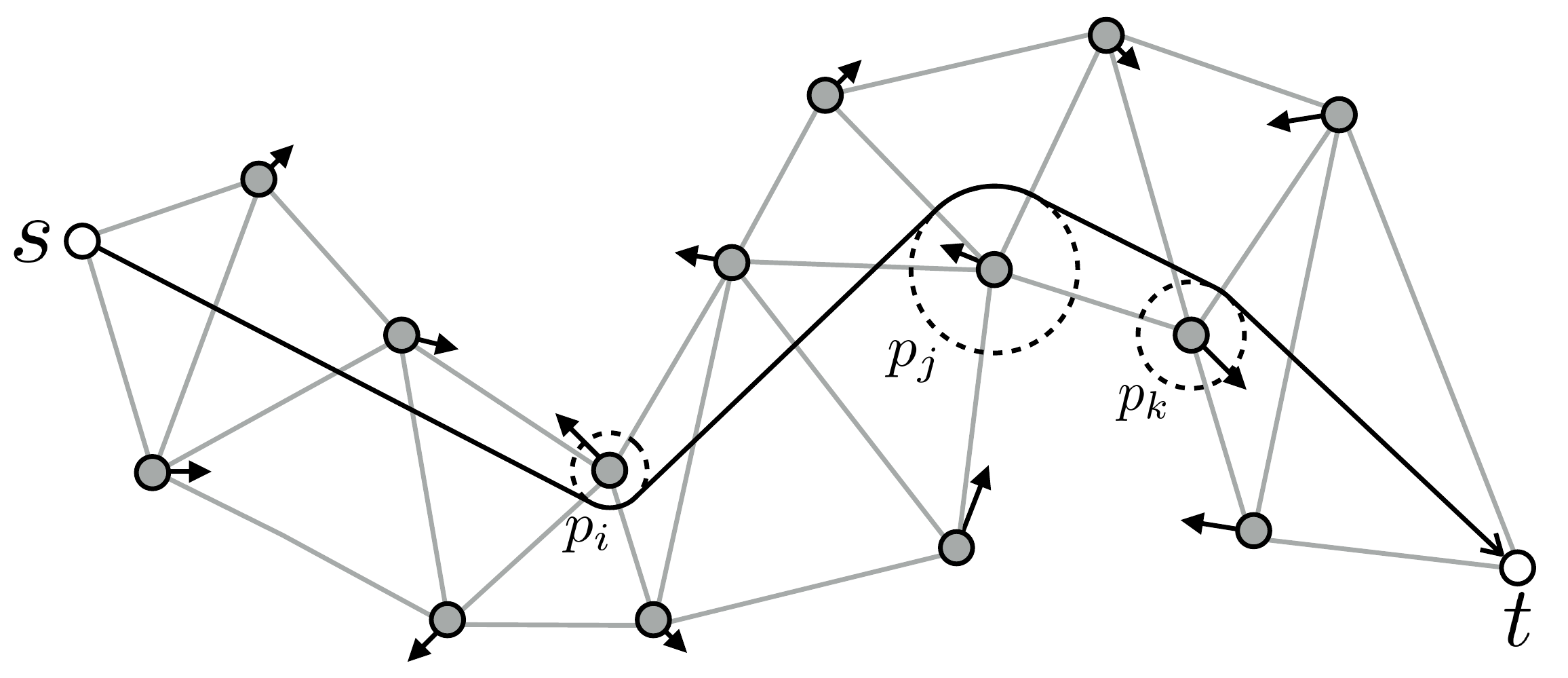}\label{fig:funnel}}
\caption{Within a channel the shortest path is determined by our  \textit{funnel algorithm}. Clearance radius is determined by whether the pedestrian is "threatening" or "enlarging" the channel. Larger radius is assigned for the former ($p_j$) while smaller radius is assigned to the latter ($p_i, p_k$).}
\label{fig:algo}
\vspace{-0pt}
\end{figure}

Most of the collision avoidance effort described above is implicitly done by enforcing timely traversal. However, when obstacles move faster than the robot simply following the path cannot guarantee  collision-free navigation. In such cases, a low-level robot-specific controller is needed to perform maneuvers for the robot to avoid collision. In a Delaunay Triangulation, the circumcircle of any triangle will not include other points inside. Thanks to this property, a collision-free circular area can be identified in the robot's vicinity, which can be used for local maneuvers when needed. Furthermore, if no paths are found by the Timed A* algorithm, other behavioral strategies can be employed to utilizes the local collision-free area. For example, the robot can follow the people walking in the goal direction while maintaining a proper clearance to others as in \cite{Muller2008}.

\section{Experimental Evaluation}
\noindent We demonstrate the advantage of our planner by testing on a large number of navigation tasks in simulated environments based on public human-trajectory datasets. We compare success rate, efficiency of traversal, and time complexity of our algorithm against two other planning methods. Finally, we empirically study the effect of observability on our algorithm.

\subsection{Experiment Settings}
\subsubsection{Datasets}
For experiments, we implemented our planning framework to control a car-like robot navigating in simulated environments. In particular, we use the pure-pursuit algorithm \cite{coulter1992implementation} to track a path output from our planner. For benchmarking, in our simulation we replay the human-trajectories from two public datasets: ETH \cite{walkalone} and UCY \cite{Lerner2007}. There are five subsets in total recorded in different scenarios, denoted as $eth\_hotel$, $eth\_univ$, $ucy\_univ$, $ucy\_zara01$ and $ucy\_zara02$.

For each dataset, the trajectory data are first interpolated and then re-projected back to the world coordinate using the homography matrices provided. We define a rectangular region as the workspace which is bounded by extreme values of the pedestrian positions of each dataset. For the navigation task, the robot starts at four mid-points of the workspace boundaries and moves to the antipodal goal position as shown in fig \ref{fig:experiment_setting}. In the experiments, only pedestrians are considered to be obstacles and all free space is assumed traversable for the robot. Fig. \ref{fig:eth_extracted} shows an example testing configuration, where pedestrians are shown as yellow circles with black arrows indicating velocity. The starting positions are shown as red circles with arrows. The velocities of pedestrians are approximated by the position differences between two consecutive frames divided by the sampling period.

Trials begin at a sequence of starting times at intervals of $3s$, which results in $1520$, $1148$, $184$, $148$, and $120$ trials for each dataset respectively. All experiments were run on a  2.6 GHz CPU/15.5 GB memory computer.

\begin{figure}[!t]
\centering
\subfloat[]{\includegraphics[height=4.4cm]{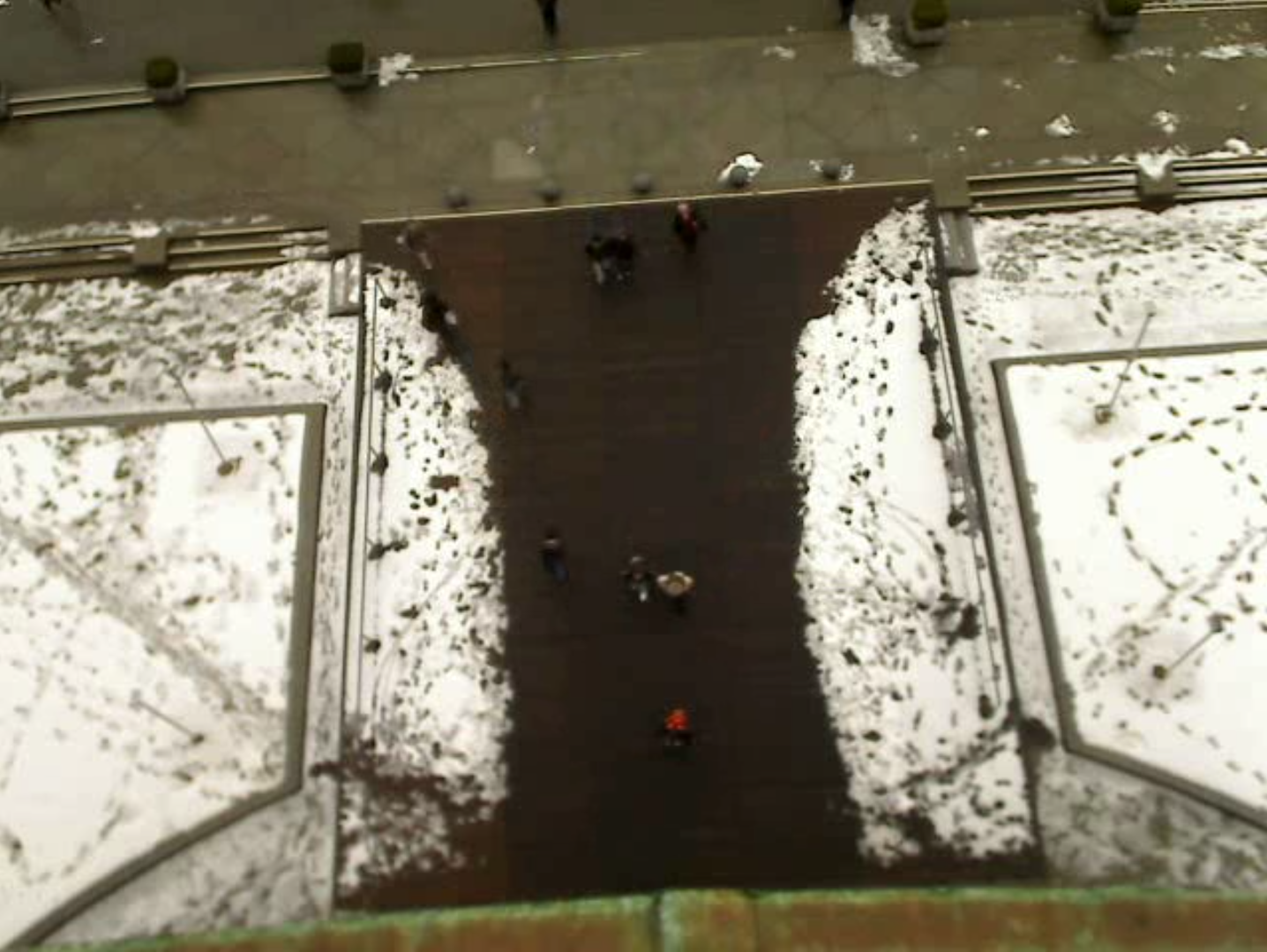}\label{fig:eth_real}}
\subfloat[]{\includegraphics[height=4.4cm]{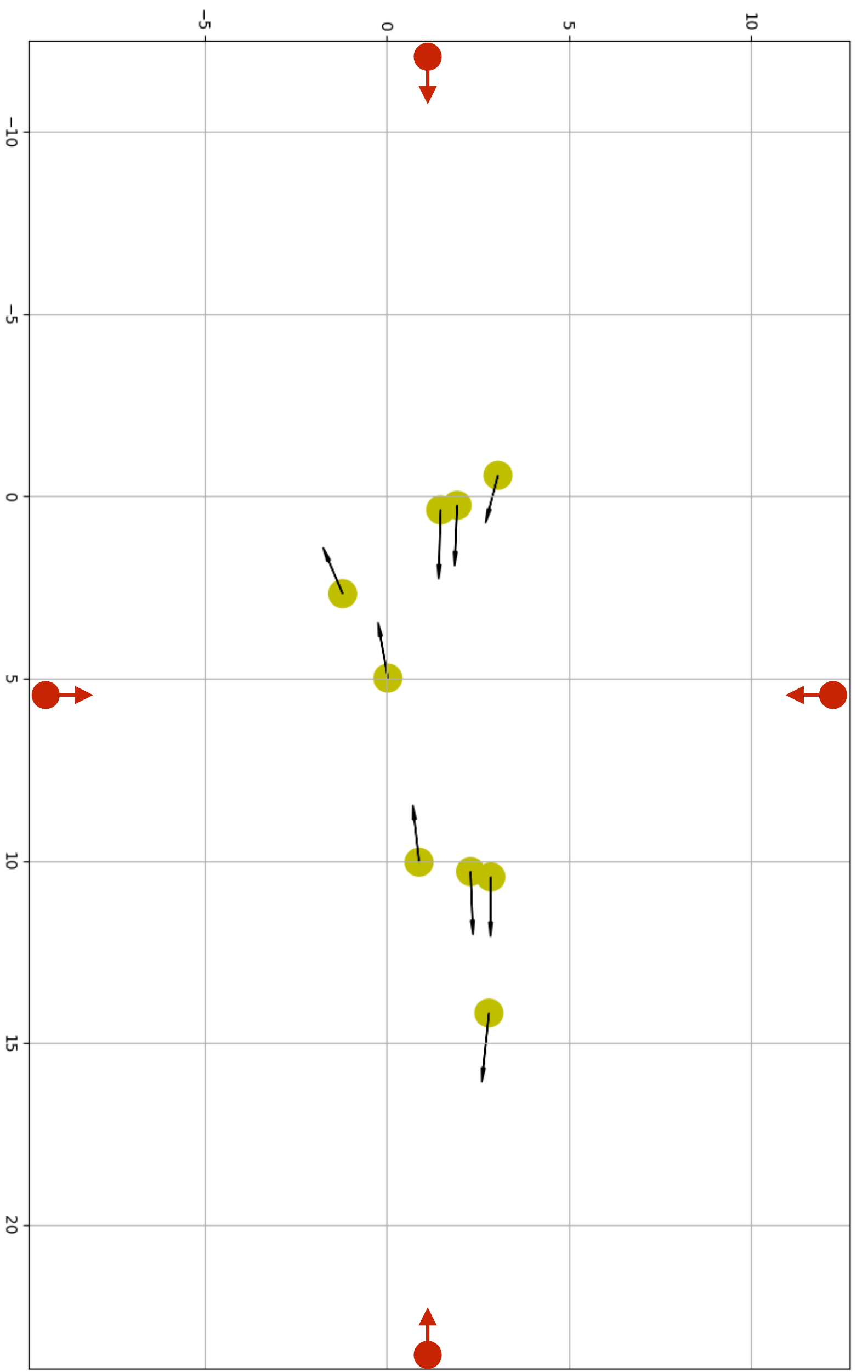}\label{fig:eth_extracted}}
\caption{Experiment Settings. (a) shows a frame from the ETH dataset. (b) shows the pedestrian positions extracted from the frame. Red circles with arrows show the starting positions of the robot in the experiments.}
\label{fig:experiment_setting}
\vspace{-0pt}
\end{figure}

\subsubsection{Generalized Velocity Obstacle Planner}
For comparison, we implemented a planning algorithm based on the Generalized Velocity Obstacle (GVO,~\cite{Wilkie2009}) to control the same car-like robot. The algorithm first randomly samples a control input $u$ from a feasible set. Then a trajectory is simulated within a time horizon for the input, which takes into account the robot dynamics. The trajectory is then checked for collision with  GVO prediction on each moving agent. Finally, among all collision-free control inputs, the one closest to the \textit{preferred} control $u^*$ is chosen. As suggested in the paper, we choose the \textit{preferred} control to be the one driving the robot directly to the goal as if there is no obstacles. However, we do not explicitly solve the optimization problem for $t^*$ that results in minimum distance between the robot trajectory and obstacles. Instead, we discretize the trajectory and check collision at 10Hz. Only in this manner can we run the algorithm efficiently enough for the benchmarking. The other parameters used were $time\_horizon=3.5s$, $wheelbase = 1m$ and $sampling\_per\_timestep = 40$.

\subsubsection{"Wait-and-go" Planner}
We designed the simplest possible crowd navigation strategy, ``wait-and-go'', as a baseline: the robot drives in a straight-line towards the goal. When the robot comes too close to pedestrians or there is a potential collision determined by  Velocity Obstacle, the robot stops (wait) and resumes moving (go) when possible. 

\subsection{Comparing success rate, efficiency and time complexity}
\label{sec:experimental-comparison}
For each trial, the first frame is concatenated to the last one to form a cycle. We run each trial until the robot reaches the goal position or all the frames have run out. If the distance between the robot and any pedestrian is less than $1m$, a collision is reported and the trial is failed. We also calculated running time to compare efficiency of the three approaches. Note that when a trial fails, the timer will not stop until the trial is finished. The robot has a speed limit of $1.2m/s$ which is slightly slower than the average human walking speed.

\begin{figure}[!t]
	\centering
	\subfloat[Success Rate]{\includegraphics[width=0.4\textwidth]{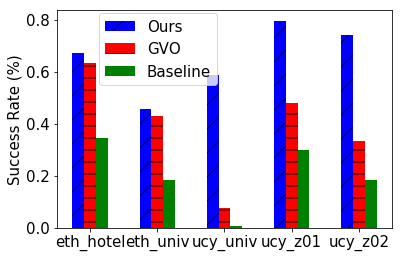}\label{fig:success_rate}}\\
	\subfloat[Efficiency]{\includegraphics[width=0.4\textwidth]{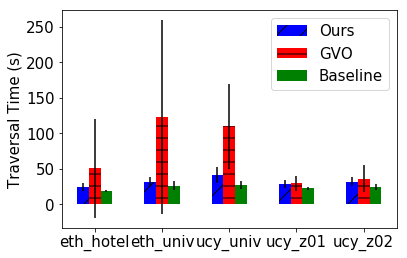}\label{fig:efficiency}}
	\caption{Experiment Results. Comparison of: (a) the success rate of navigation tasks in each dataset. (b) The average traveling time of the robot in each dataset. Vertical line segments show the variance by a standard deviation.}
	\label{fig:results_success_rate}
	\vspace{-16pt}
\end{figure}

\begin{figure}[!t]
	\centering
	\subfloat[Time Complexity]{\includegraphics[width=0.4\textwidth]{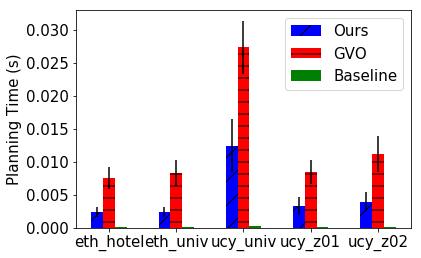}\label{fig:time_complexity}}\\
	\subfloat[Number of Pedestrians]{\includegraphics[width=0.4\textwidth]{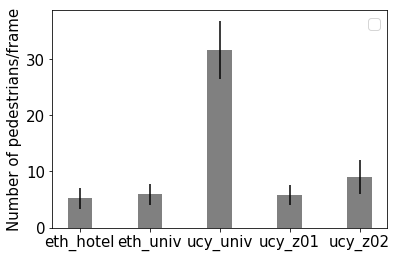}\label{fig:crowd_density}}
	\caption{Experiment Results. Comparison of: (a) the average planning time of a frame for each dataset. (b) The average number of pedestrians in each dataset. Vertical line segments show the variance by a standard deviation.}
	\label{fig:results_success_rate}
	\vspace{-15pt}
\end{figure}

\subsubsection{Success Rate}
From the result shown in Fig. \ref{fig:success_rate}, our planner has the highest success rates in all five datasets, while GVO ranks  second and the baseline planner performs last. For the challenging dataset \textit{ucy\_univ}, where the crowd density reaches $30$ pedestrians per frame, our planner significantly outperforms the baselines. Anecdotally, most of the failure cases are due to sudden pedestrian appearance and nonlinear pedestrian movement.

\subsubsection{Traveling Efficiency}
For traversal efficiency, as shown in Fig. \ref{fig:efficiency}, our planner is more efficient than "wait-and-go," while GVO exhibits much longer times with much larger variance. This is because GVO tends to passively dodge approaching pedestrians (resulting more evasive strategies), while our planner actively looks for open spaces and a long-term shortest route.

\subsubsection{Computational Complexity}
For computational complexity (~\ref{fig:time_complexity}), both our approach and GVO sees an increase in computation as more pedestrians enter the scene. However, our method is significantly more time efficient than GVO.

\begin{figure}[!t]
\vspace{0pt}
\centering
\subfloat[$t = 2s$]{\includegraphics[width=0.243\textwidth]{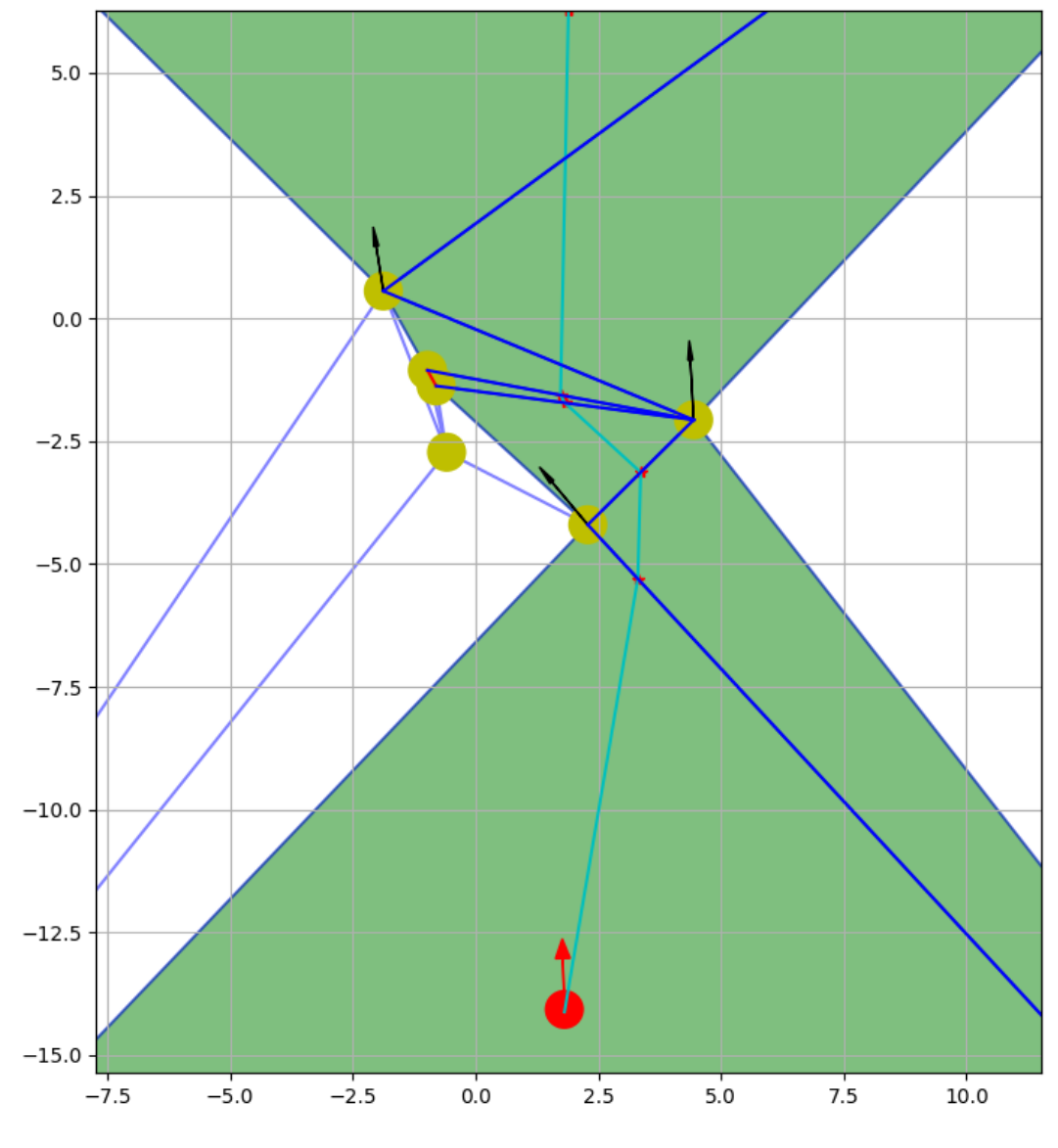}}
\subfloat[$t = 7.25s$]{\includegraphics[width=0.243\textwidth]{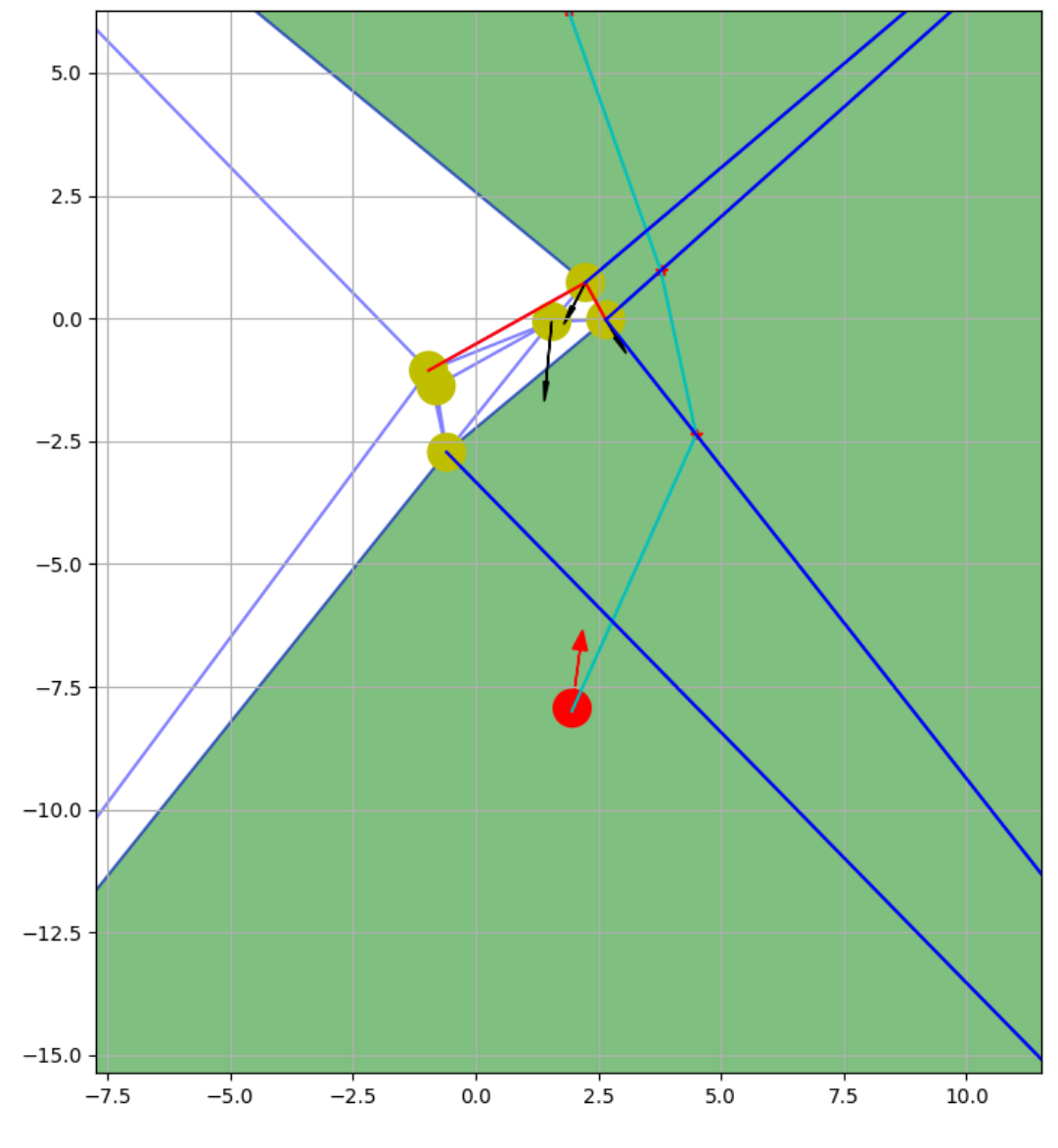}}\\
\subfloat[$t = 11.25s$]{\includegraphics[width=0.243\textwidth]{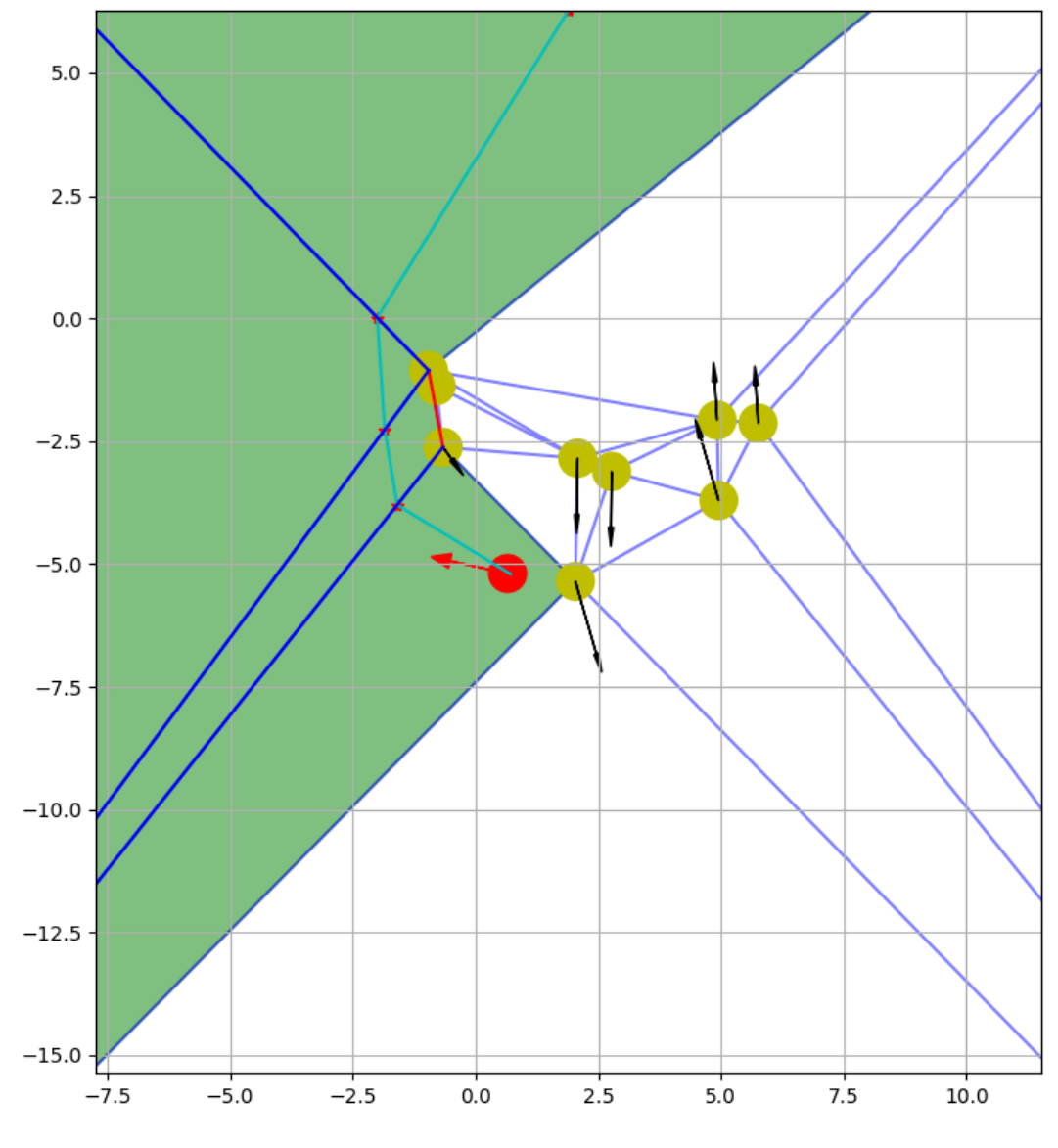}}
\subfloat[$t = 15s$]{\includegraphics[width=0.243\textwidth]{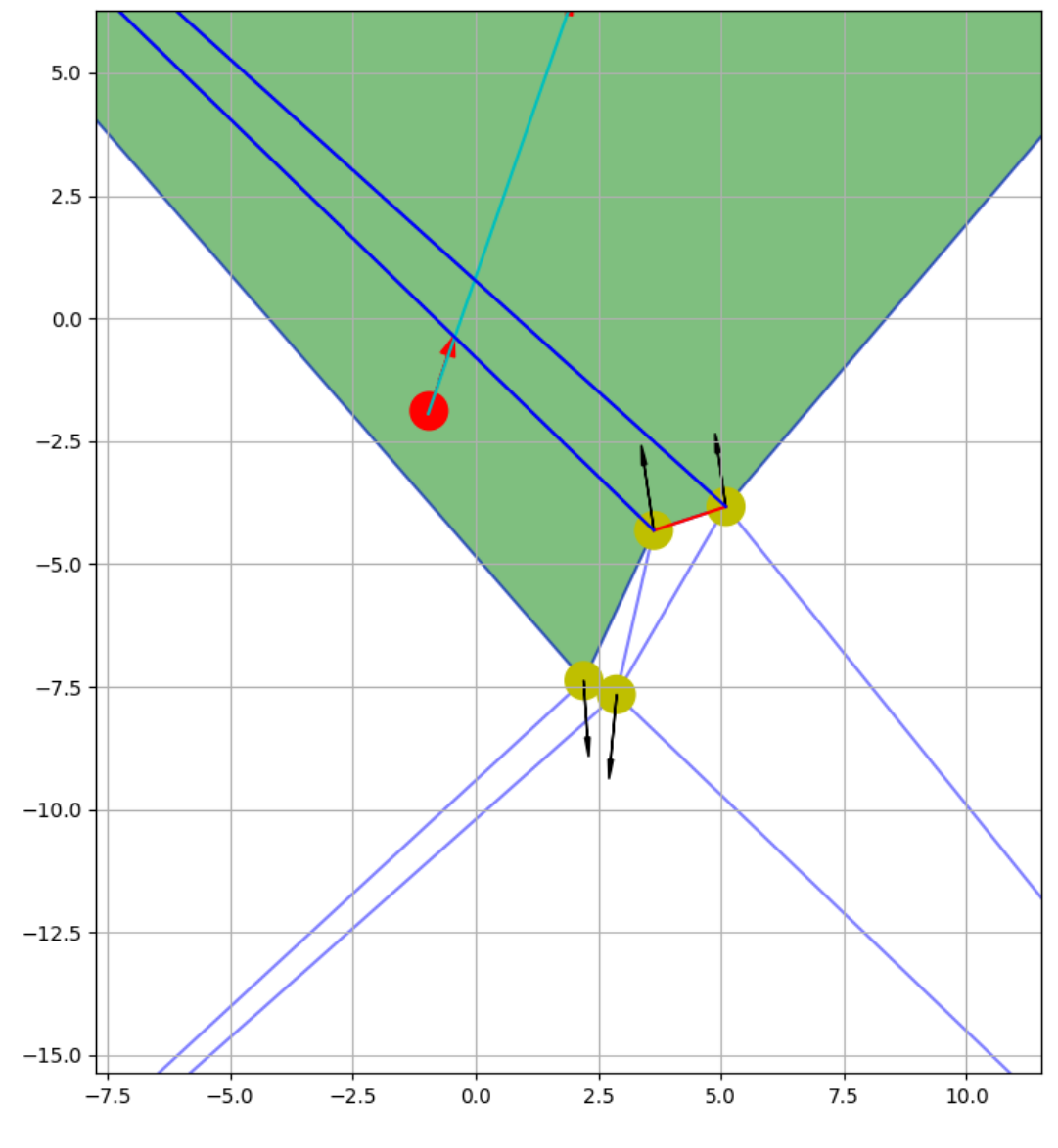}}
\caption{The path found by our planner in the \textit{eth\_univ} dataset. Green areas show dynamic channel. Cyan line shows  trajectory. Blue line segments show valid gates to cross; red ones show unsafe gates. The robot (red filled circle) attempts to navigate from bottom to  top. Four static pedestrians inhabit the four corners of the workspace, allowing robot to travel to the crowd's side.}
\label{fig:sequence}
\vspace{-0pt}
\end{figure}

Qualitatively, Fig. \ref{fig:sequence} shows a sequence of the robot navigating in the $eth\_hotel$ dataset. At the beginning, the robot plans to pass by the pedestrians from the right. Between $t = 7.25$ and $t = 11.25$, there suddenly appears a group of people on the right side. Our planner is able to adapt to this and steer the robot to the left for a safe-pass.


\subsection{Observability Effects}
We also study observability effects by testing our planner on a larger simulated scene with limited sensor ranges. The simulated pedestrians are generated as follows. For every $5s$, $2$ pedestrians are spawned from the two sides of the workspace. Each pedestrian is assigned with a random goal position at the other side and a random linear velocity towards the goal. While traveling to the goal the pedestrians will randomly change their goal and speeds every $3s$.  The observation range of the robot is tested from $1m$ to $30m$, with the interval being $1m$ when under $10m$ and $5m$ when above $10m$. We also run trials without sensor range limit for comparison. Beyond the sensing range, only the goal position is known, so the robot has no information about  pedestrians outside the field of view. In each trial, the robot navigates from the lower right corner to the upper left corner or reversely. For each sensing range, 100 trials are performed. Fig. \ref{fig:local_sequence} shows an example scene when the robot navigates in the synthetic crowd with the sensing range limited to $30m$.

\begin{figure}[t]
\vspace{11pt}
\centering
\subfloat[]{\includegraphics[width=1.0\linewidth]{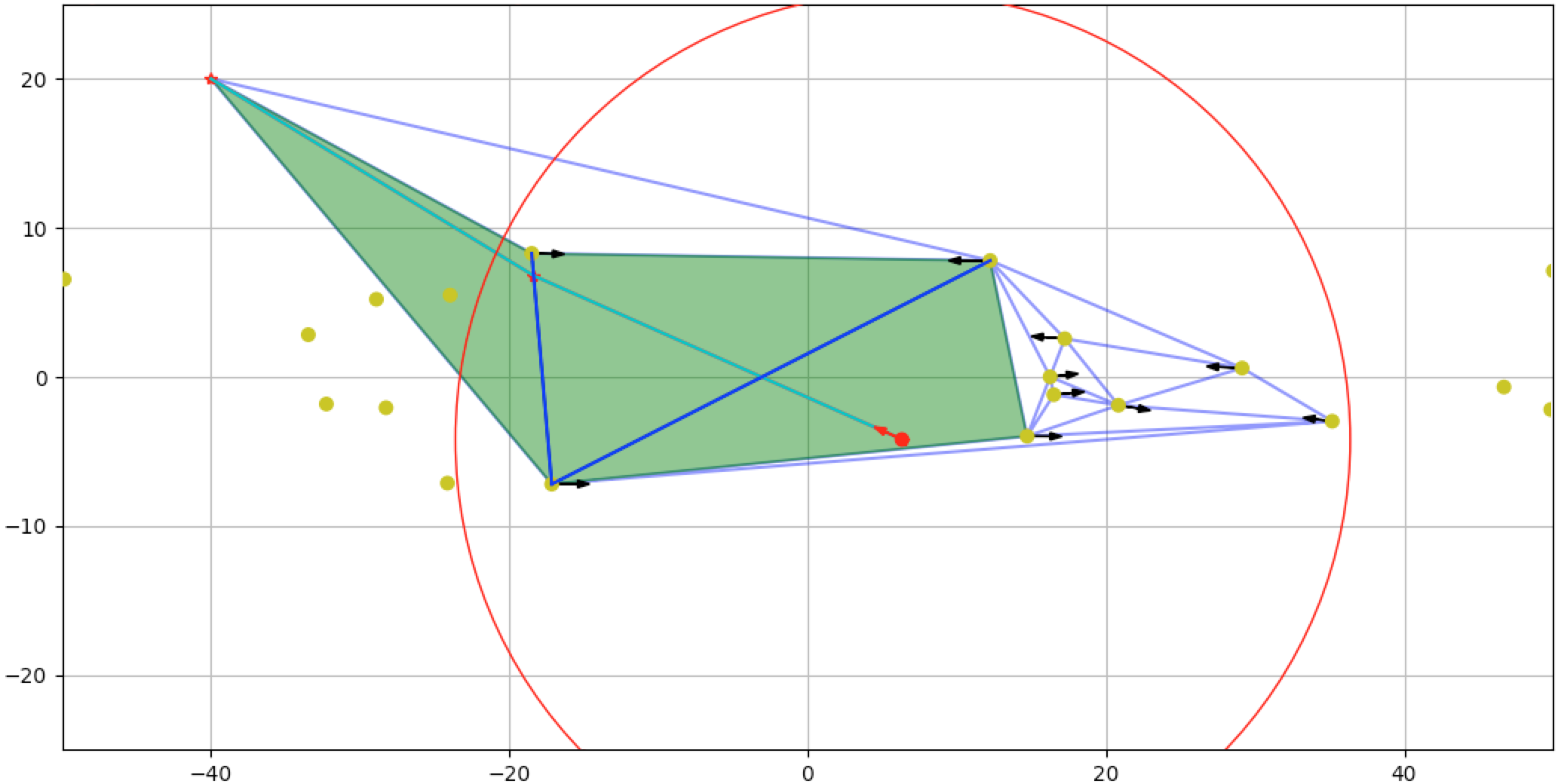}}
\caption{Path found by our planner in a synthetic dataset with a limited sensor range, denoted by the red circle. Pedestrians outside the sensor range are not included in the triangulation thus not observed by the planner.}
\label{fig:local_sequence}
\vspace{-0pt}
\end{figure}

\begin{figure}[!ht]
	\centering
	\subfloat[Success Rate]{\includegraphics[width=0.5\linewidth]{success_rate.png}\label{fig:sensor_range_success_rate}}
	\subfloat[Efficiency]{\includegraphics[width=0.5\linewidth]{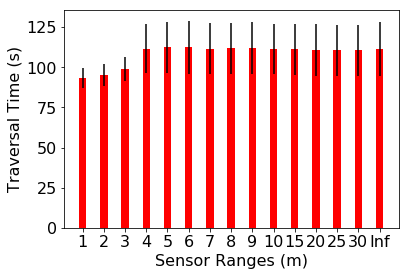}\label{fig:sensor_range_travel_time}}\\
	\subfloat[Time Complexity]{\includegraphics[width=0.515\linewidth]{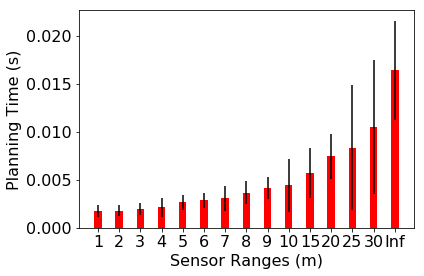}\label{fig:sensor_range_planning_time}}
	\subfloat[Number of Pedestrians]{\includegraphics[width=0.485\linewidth]{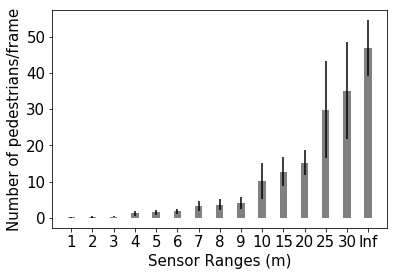}\label{fig:sensor_range_ped_num}}
	\caption{Experiment Results on the effect of observability. Comparison of: (a) average navigation success rates,  (b) average traveling  time, (c) average planning time and (d) average number of pedestrians seen by the planner with different sensing ranges. Vertical line segments show standard deviation.}
	\label{fig:sensor_range_result}
	\vspace{-0pt}
\end{figure}

Results are presented in Fig. \ref{fig:sensor_range_result}. In Fig. \ref{fig:sensor_range_success_rate} and Fig. \ref{fig:sensor_range_travel_time}, we observe that the sensor range does not make a big difference for the success rate and traveling time beyond $3m$.  Medium ranges (3 to 9m) are only slightly better than long ranges (10 to without limit). This is because the longer the channel, the less accurate the $\tau_{ETA}$ will be, and thus the guarantee for safe-passage for later portions of the channel are less likely to be valid. A practical solution to this problem is to set a time horizon for the planner. Whenever the estimated $\tau_{ETA}$ is longer than the horizon, we release the safety check. For the time complexity in Fig. \ref{fig:sensor_range_planning_time}, we see that the time needed for a planning cycle increases rapidly with the increasing number of pedestrians per frame as in Fig. \ref{fig:sensor_range_ped_num}. However, our algorithm is still efficient in such crowded environments. It can be seen from the figure that for pedestrian number over $50$, our method re-plans at $50$Hz and with less than $30$ pedestrians our method re-plans at $100$Hz.

\section{Conclusions and Future Work}
\noindent We introduced a geometry-based planning framework for crowd navigation. Our primary contribution is efficient global planning while accounting for obstacle dynamics; a proof of completeness and optimality is provided. Empirically, dynamic channels improved success rate by up to a factor of six in the most crowded environments, task completion by up to a factor of three, and decreased computational burden by a factor of two.  However, our framework has limitations. First, the method assumes perfect measurements. Second, we assume linear pedestrian motion models; even though our method was evaluated on real world pedestrian datasets with extended periods of nonlinear motion, further experiments are needed to validate robustness in interactive crowds.  Future work will integrate dynamic channels with low-level planners to increase this robustness. Additionally, we will extend our architecture for multi-robot navigation, crowd simulation, and the handling of both dynamic and static obstacles with arbitrary shapes and movement uncertainty.

\section{On the Completeness of Pathfinding in Triangulation}
\noindent We present a detailed discussion on the completeness of pathfinding in triangulation. Let the crowd positions at time $\tau$ be denoted as $\calP = \{\position \mid \position\in\reals^2, i=1,\ldots,\nt  \}$, where $\ntau$ is the number of pedestrians, the robot starting point be $s \in \mathbb{R}^2$, and the robot ending point be $t \in \mathbb{R}^2 ~ (t \neq s)$.  The task is to plan homotopically distinct paths through the crowd. Without loss of generality, we assume that $s$ and $t$ are outside of the convex hull of $\calP$.  We provide the following important definitions.

\begin{defi}[\textbf{Path}]\label{def:path}
A path is a continuous mapping $\bff : [0, 1] \rightarrow \mathbb{R}^2$ with $\bff(0) = s$ and $\bff(1) = t$.
\end{defi}
\begin{defi}[\textbf{Path homotopy with respect to $\calP$}]\label{def:path_homo}
Two paths $\bff_1$ and $\bff_2$ are path homotopic with respect to $\calP$ if one can be continuously deformed into the other without intersecting any points in $\calP$.
\end{defi}

\begin{lemma}\label{lemma:partition_to_curve}
Given a partition of a finite $\calP = (\calP^+, \calP^-)$, there exists a curve parameterized by $f(x, y) = 0$ such that $f(x_p,y_p) > 0$ for all $(x_p, y_p) \in \calP^+$ and $f(x_q, y_q) < 0$ for all $(x_q, y_q) \in \calP^-$.
\end{lemma}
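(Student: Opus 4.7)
The plan is to give an explicit construction and then read off the claimed curve as a level set of the constructed function. Let
\[
\delta \;=\; \min\bigl\{\|p-q\| : p\in\calP^+,\; q\in\calP^-\bigr\};
\]
since $\calP$ is finite and $\calP^+ \cap \calP^- = \emptyset$, we have $\delta > 0$. First I would define
\[
f(x,y) \;=\; \min_{q\in\calP^-}\|(x,y)-q\| \;-\; \min_{p\in\calP^+}\|(x,y)-p\|,
\]
which, as a difference of two distance-to-set functions on a closed set, is $1$-Lipschitz and hence continuous on $\reals^2$.

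Next I would verify the sign conditions pointwise. For any $(x_p,y_p)\in\calP^+$, the second minimum vanishes (take $p=(x_p,y_p)$) while the first is at least $\delta$, so $f(x_p,y_p) \geq \delta > 0$. The symmetric calculation gives $f(x_q,y_q) \leq -\delta < 0$ for $(x_q,y_q)\in\calP^-$. The desired curve is then the level set $\{(x,y) : f(x,y) = 0\}$; the intermediate value theorem applied along any segment from a point of $\calP^+$ to a point of $\calP^-$ shows it is non-empty.

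The step that I expect to require the most care is justifying that the zero set actually deserves the label \emph{curve}. Both $\min_{q\in\calP^-}\|(x,y)-q\|$ and $\min_{p\in\calP^+}\|(x,y)-p\|$ are piecewise smooth, with pieces given by the Voronoi cells of $\calP^-$ and $\calP^+$ respectively. Restricted to the intersection of one such cell from each partition, $f$ reduces to a difference of two Euclidean distances to fixed points, whose zero set is a perpendicular bisector. Consequently $f^{-1}(0)$ is exactly the union of the Voronoi-edge pieces that separate a cell of a point in $\calP^+$ from a cell of a point in $\calP^-$: a finite collection of line segments and rays forming a planar polygonal curve. If a $C^\infty$ curve is preferred, one can instead take $f(x,y) = \sum_{p\in\calP^+} e^{-c\|(x,y)-p\|^2} - \sum_{q\in\calP^-} e^{-c\|(x,y)-q\|^2}$; the sign conditions hold once $c > (\ln|\calP|)/\delta^2$, and by Sard's theorem $0$ is a regular value for almost every such $c$, so $f^{-1}(0)$ is a smooth $1$-submanifold of $\reals^2$.
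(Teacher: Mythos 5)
Your proof is correct, but it takes a genuinely different route from the paper. The paper's argument is algebraic: it counts the $M+N$ sign constraints, relaxes them to interpolation conditions, and invokes the solvability of an underdetermined linear system to produce a polynomial $f$ of sufficiently high degree. You instead build $f$ explicitly as the difference of the distance functions to $\calP^-$ and to $\calP^+$, verify the signs via the separation constant $\delta$, and identify the zero set with the bichromatic Voronoi boundary. Your construction buys several things: it is fully explicit and constructive; it sidesteps the consistency question the paper glosses over (an underdetermined linear system need not be solvable unless one also checks that the evaluation functionals at distinct points impose independent, consistent conditions); and it gives a concrete geometric picture of the separating set, which fits the paper's computational-geometry framing. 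What the paper's approach buys is smoothness for free (a polynomial level set) where your primary $f$ is only Lipschitz, though your Gaussian variant recovers that at the cost of a parametric Sard argument that you state rather than prove. Two small caveats: if one side of the partition is empty your $\delta$ and the minimum over the empty set are undefined (handle this degenerate case by taking $f \equiv \pm 1$), and the zero set $\{d(\cdot,\calP^+)=d(\cdot,\calP^-)\}$ can have branch points where several bisector pieces meet, so it is a one-dimensional polygonal complex rather than literally a single curve --- but for the downstream use in Propositions \ref{prop:partition_to_homotopy} and \ref{prop:cut_to_homotopy}, which only need a continuous $f$ with the stated signs, this is immaterial.
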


\begin{proof}
Let $||\calP^+|| = M$ and $||\calP^-|| = N$.  Then the conditions $f(x_p,y_p)>0$  for all $(x_p, y_p)\in \calP^+$ and $f(x_q,y_q)<0$ for all $(x_q, y_q)\in \calP^-$ give $M + N$ constraints in total. A polynomial with degree larger than $M + N$ exists that satisfies these conditions since this forms an undetermined linear system. 
\end{proof}

\begin{figure}[ht!]
\centering
\caption{Illustration of a partition of $\calP$ by $f(x,y) = 0$.}
\label{fig:1}
\end{figure}

\begin{proposition}\label{prop:partition_to_homotopy}
The homotopy class of a path $\bff$ with respect to $\calP$ uniquely determines a partition of $\calP$ and vice versa. Namely, (a) if two paths are homotopic, then they give the same partition of $\calP$, and (b) if two paths give the same partition of $\calP$, then they are homotopic.
\end{proposition}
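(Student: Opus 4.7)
\medskip
\noindent\emph{Proof plan.} The plan is first to make precise how a path $\bff$ induces a partition of $\calP$, and then to prove the two implications separately. Because $s$ and $t$ lie outside the convex hull of $\calP$, I extend $\bff$ at each end by a ray chosen to miss $\calP$; viewed in the one-point compactification $S^2 = \reals^2 \cup \{\infty\}$, the result is a closed loop disjoint from $\calP$. For a simple path (the case relevant to Theorem~\ref{theorem:channel_to_homotopy}) the Jordan curve theorem on $S^2$ splits $\calP$ into two groups, which I take as $\calP^+$ and $\calP^-$; more generally, I would attach a sign to each $p\in\calP$ via a winding-number-type invariant of the extended loop around $p$.

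For direction (a) I argue by continuity. Given a homotopy $H:[0,1]\times[0,1]\to\reals^2\setminus\calP$ with $H(\cdot,0)=\bff_1$ and $H(\cdot,1)=\bff_2$, and for each $p\in\calP$, I consider the $\{+1,-1\}$-valued function $\sigma_p(u)$ recording which side of the extended path $H(\cdot,u)$ contains $p$. Since $p$ is a positive distance from the compact image of $H$, small perturbations of the curve cannot move the sign of $p$, so $\sigma_p$ is locally constant in $u$; being a locally constant discrete-valued function on the connected interval $[0,1]$, it is constant. Hence $\bff_1$ and $\bff_2$ produce the same partition of $\calP$.

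For direction (b) I construct an explicit homotopy. Applying Lemma~\ref{lemma:partition_to_curve} to the common partition $(\calP^+,\calP^-)$ yields a curve $C=\{f=0\}$ with $\calP^+$ on its positive side and $\calP^-$ on its negative side. Deforming $C$, away from $\calP$, so that it begins at $s$ and ends at $t$ produces a canonical path $\bff_C$ realizing the partition. Since $\bff_1$ and $\bff_C$ give the same $\sigma_p$ for every $p$, their concatenation $\bff_1\cdot\bff_C^{-1}$ is a loop whose winding number around each $p$ is zero. Restricting, as Theorem~\ref{theorem:channel_to_homotopy} does, to simple paths traversing the channel of a triangulation, I conclude by showing triangle by triangle that any two simple arcs crossing a pedestrian-free triangular disk with matched endpoints are homotopic rel endpoints, and then stitching these local homotopies into a global homotopy $\bff_1\simeq\bff_C\simeq\bff_2$.

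The main obstacle is direction (b): a loop with vanishing winding numbers in a punctured plane can still be non-trivial in $\pi_1$, which is a free (hence non-abelian) group, so one cannot get away with a purely homological argument. The workaround is to exploit that the paths under consideration are simple and factor through a common triangulation, so the homotopy can be built cell by cell over disks disjoint from $\calP$. Ensuring that the cell-wise deformations agree on shared triangulation edges, so that they assemble into a single continuous $H$, is the step I expect to demand the most care in a complete proof.
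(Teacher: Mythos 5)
Your argument for direction (a) is correct and takes a genuinely different route from the paper's. The paper represents each path as a level set $f_i(x,y)=0$ and runs the intermediate value theorem on a homotopy $H:\reals^2\times[0,1]\to\reals$ of the \emph{defining functions}; your version instead tracks, for each fixed $p\in\calP$, a discrete side/winding invariant $\sigma_p(u)$ along the path homotopy itself and uses local constancy on the connected interval $[0,1]$. Your route is cleaner, since it never has to explain why a homotopy of paths induces a sign-coherent homotopy of implicit functions (a point the paper leaves unaddressed). One small repair: for non-simple intermediate curves $H(\cdot,u)$ the phrase ``which side contains $p$'' is undefined, so you should commit to the winding number of the loop obtained by closing the extended path through infinity; that is integer-valued, continuous in $u$, and specializes to the Jordan-curve side for simple arcs.

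Direction (b) is where the genuine gap sits, and it sits in both your plan and the paper's proof. You are right that vanishing winding numbers do not imply triviality in the free group $\pi_1(\reals^2\setminus\calP)$, and the paper's own justification (``otherwise $f_1$ can continuously deform to $f_2$ without intersecting $p$'') only establishes that some $H(x_p,y_p,T)$ vanishes, not that $H(x_p,y_p,0)H(x_p,y_p,1)<0$; the sign could return, so the paper's contrapositive does not close. Your proposed repair, however, has two unfilled holes. First, Lemma~\ref{lemma:partition_to_curve} hands you the zero set of a polynomial, which need not be connected, need not be an arc from $s$ to $t$, and may have several components separating $\calP$ in unintended ways; ``deforming $C$ so that it begins at $s$ and ends at $t$'' is not yet a construction, and the resulting $\bff_C$ is not canonically a single homotopy class. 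Second, and more seriously, your triangle-by-triangle stitching presupposes that $\bff_1$ and $\bff_2$ traverse the \emph{same} channel, but the hypothesis gives only that they induce the same partition of $\calP$; you would need an extra argument (essentially combining Theorem~\ref{thrm:paths-and-walks} with the cut/cycle duality behind Proposition~\ref{prop:cycle_to_homotopy}) showing that equal partitions force the two reduced dual-graph walks to coincide, and this is false for arbitrary paths, as the Dehn-twist examples implicit in your own free-group remark show. So as written, (b) correctly diagnoses the obstruction but does not overcome it: the claim is only salvageable under the restriction to channel-traversing simple paths, and that restriction must be made part of the statement, not just of the proof.
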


\begin{proof}
Proof of (a): Prove by contradiction.
Recall that two paths $\bff_1$ and $\bff_2$ with the same starting and ending points are homotopic if and only if one can be continuously deformed into the other without intersecting any points from $\calP$. Let homotopic paths $\bff_1$ and $\bff_2$ be parameterized by curves $f_1(x, y) = 0$ and $f_2(x, y) = 0$, which partition $\calP$ into $(B^+, B^-)$ and $(C^+, C^-)$ respectively. Assume that $B^+ \neq C^+$ and $B^- \neq C^-$, there exists a point $p = (x_p, y_p)$ such that $p \in B^+$ and $p \notin C^+$, which implies $p \in B^+$ and $p \in C^-$. Therefore, $f_1(x_p, y_p) > 0$ and $f_2(x_p, y_p) < 0$. Since $f_1$ and $f_2$ are homotopic, let $H$ be a continuous map from $f_1$ to $f_2$ $H: \mathbb{R}^2 \times [0, 1] \rightarrow \mathbb{R}$, such that
\begin{equation}
  \begin{split}
  H(x, y, 0) = f_1(x, y)\\
  H(x, y, 1) = f_2(x, y)
  \end{split}
\end{equation}
for all $x$, $y$ in the domain of $f_1$ and $f_2$. Then at $x = x_p$ and $y = y_p$,
\begin{equation}
  \begin{split}
  H(x_p, y_p, 0) = f_1(x_p, y_p) > 0\\
  H(x_p, y_p, 1) = f_2(x_p, y_p) < 0
  \end{split}
\end{equation}
Because $H$ is continuous, there exists some $T \in [0, 1]$, such that $H(x_p, y_p, T) = 0$. Thus contradiction occurs.

Proof of (b): We will prove the contrapositive. Consider two homotopically distinct paths $\bff_1$ and $\bff_2$, one cannot continuously deform into the other without intersecting $\calP$. Then for any continuous map $H$ from $f_1$ to $f_2$, there exists a point $p=(x_p,y_p) \in \calP$ and $T \in [0, 1]$ such that
\begin{equation}
  \begin{split}
  H(x_p, y_p, T) = 0
  \end{split}
\end{equation}
and
\begin{equation}
  \begin{split}
  H(x_p, y_p, 0)H(x_p, y_p, 1) < 0
  \end{split}
\end{equation}
Otherwise, $f_1$ can continuously deform to $f_2$ without intersecting $p$. Therefore, $f_1(x_p, y_p)f_2(x_p, y_p) < 0$. Let $f_1(x, y) = 0$ and $f_2(x, y) = 0$ partition $\calP$ into $(B^+, B^-)$ and $(C^+, C^-)$ respectively, then $p$ cannot be both in $B^+$ and $C^+$ (or $B^-$ and $C^-$). Thus the partition of $\calP$ by $\bff_1$ and $\bff_2$ is different.
\end{proof}

\begin{defi}[Triangulation $\triangulation$ of  $\calP$]\label{def:triangulation}
A triangulated graph $\triangulation = (V, E)$, where $V = \calP$ and $E$ gives a maximal set of non-crossing edges between points of $\calP$
\end{defi}
Fig. \ref{fig:8} shows an example of the triangulation $\triangulation = (V, E)$ of $\calP$, where $V$ and $E$ are shown as grey filled circles and line segments, respectively.

\begin{defi}[A cut/cut-set of $\triangulation$]\label{def:cut}
A cut of $\triangulation = (V, E)$ is a partition of $V$ into two disjoint subsets. A cut-set is the set of edges that have one endpoint in each subset.
\end{defi}

Fig. \ref{fig:10} gives an example of cut of $\triangulation$. Vertices of $\triangulation$ are partitioned into two disjoint subsets distinguished by red and blue circles. Edges in the cut-set are shown by the dashed line segments.

\begin{figure}[!ht]
	\centering
	\subfloat[]{\includegraphics[width=0.3\linewidth]{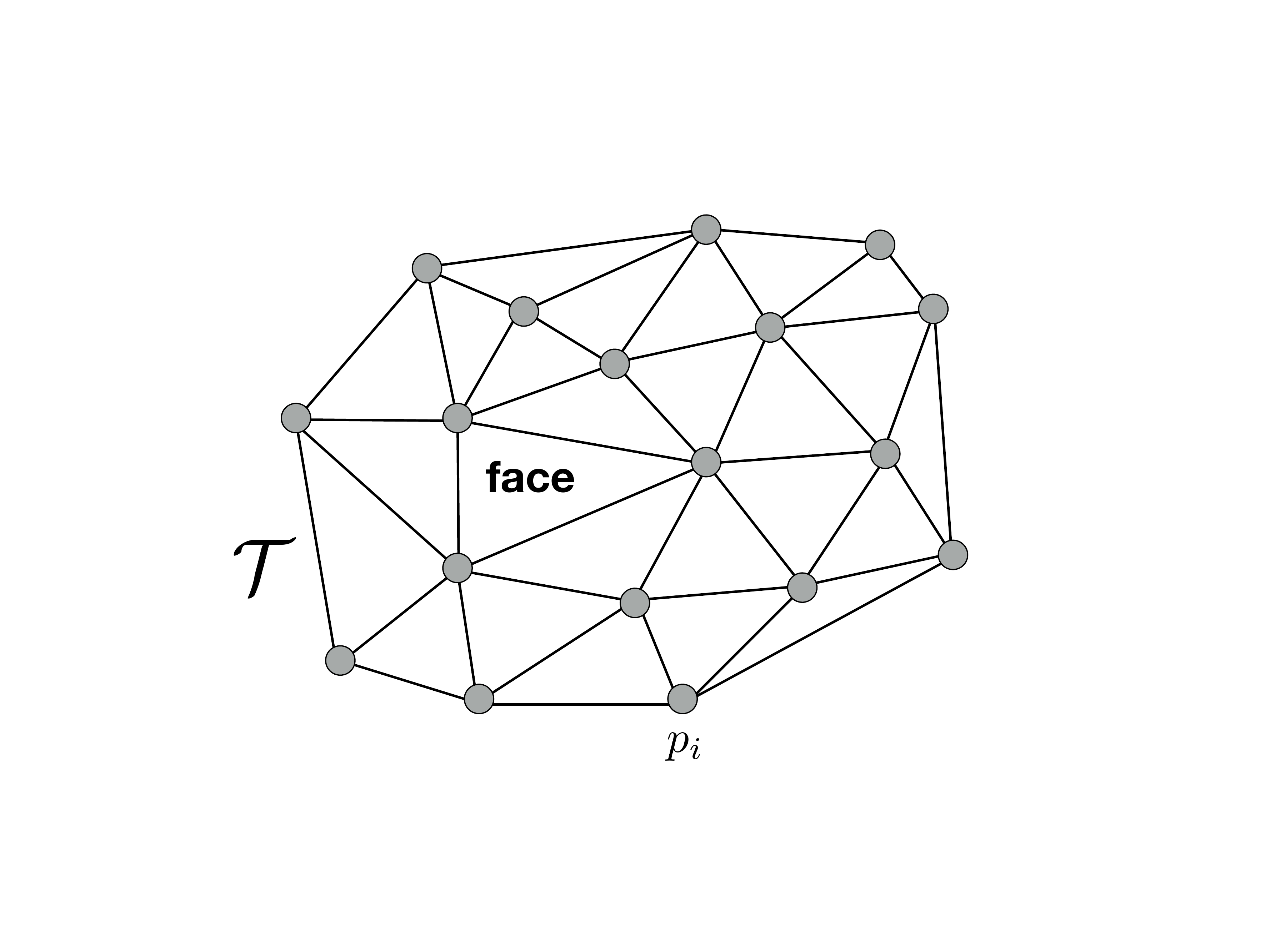}\label{fig:8}}
  ~ ~
  \subfloat[]{\includegraphics[width=0.3\linewidth]{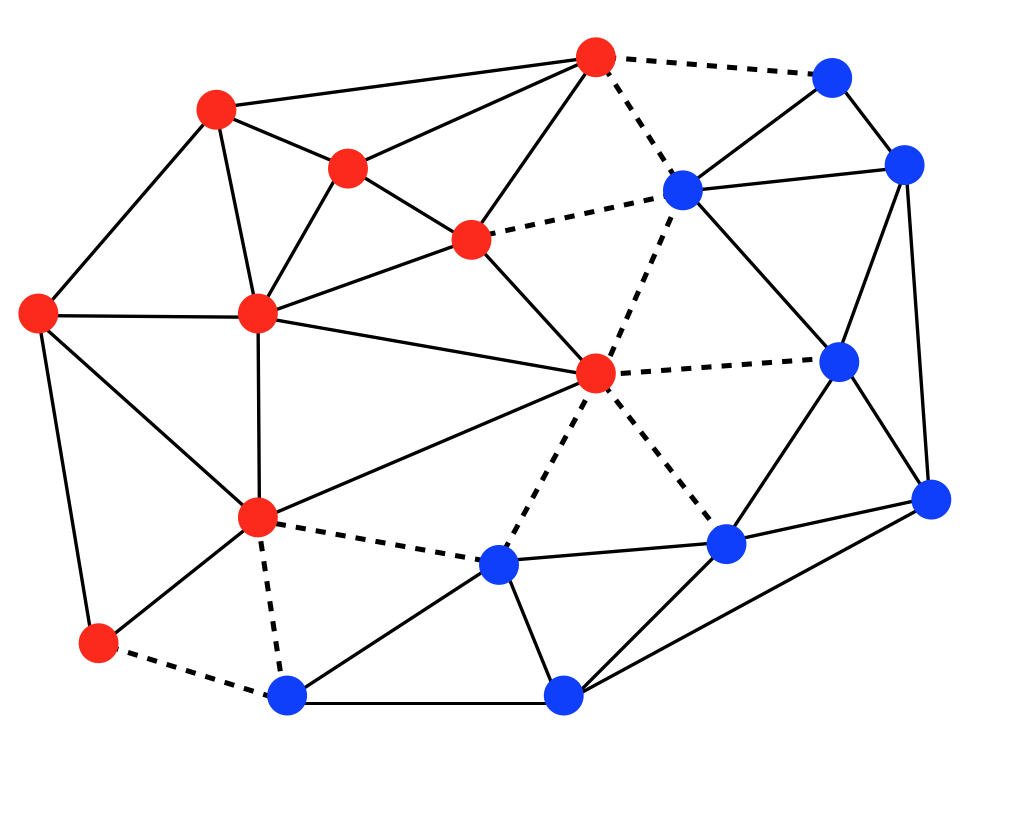}\label{fig:10}}
	\label{fig:81012}
	\vspace{-15pt}
\end{figure}

\begin{proposition}\label{prop:cut_to_homotopy}
A cut of $\triangulation$ uniquely determines a homotopy class of a path through $\calP$.
\end{proposition}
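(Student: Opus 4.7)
The plan is to chain Definition~\ref{def:cut} with Proposition~\ref{prop:partition_to_homotopy}. Because the triangulation $\triangulation$ has vertex set $V = \calP$, a cut of $\triangulation$ is by definition a partition of $\calP$ into two disjoint subsets $(\calP^+, \calP^-)$. Proposition~\ref{prop:partition_to_homotopy} already establishes the correspondence between partitions of $\calP$ and homotopy classes of paths from $s$ to $t$, so the proposition reduces to verifying that this correspondence, when restricted to partitions induced by cuts, is well-defined.

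First I would unpack the cut as a partition $(\calP^+, \calP^-)$ of $\calP$ and invoke Lemma~\ref{lemma:partition_to_curve} to produce a polynomial separator $f(x,y) = 0$ with $f > 0$ on $\calP^+$ and $f < 0$ on $\calP^-$. Next, for existence, I would exhibit a concrete path $\bff:[0,1] \to \reals^2 \setminus \calP$ with $\bff(0)=s$ and $\bff(1)=t$ that induces the same sign pattern on $\calP$ as $f$: using the standing assumption that $s$ and $t$ lie outside the convex hull of $\calP$, one follows a connected arc of $\{f=0\}$ through the point cloud and closes the path through the unobstructed exterior, perturbing in small neighborhoods if necessary to avoid the finitely many points of $\calP$. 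Finally, I would apply Proposition~\ref{prop:partition_to_homotopy}(b) to any two such paths: both induce the partition $(\calP^+,\calP^-)$, hence they are homotopic with respect to $\calP$, so the homotopy class is well-defined by the cut.

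The main obstacle is the existence step. The zero set $\{f=0\}$ produced by Lemma~\ref{lemma:partition_to_curve} is the vanishing locus of a high-degree polynomial, so it need not be a single Jordan arc joining $s$ and $t$ and may have several connected components or singularities. The care required is to show that, notwithstanding these complications, a continuous path from $s$ to $t$ exists in $\reals^2 \setminus \calP$ that realizes exactly the prescribed partition. This reduces to a standard topological argument exploiting that $\calP$ is a finite point set (so $\reals^2 \setminus \calP$ is path-connected) together with the sign-separation property inherited from $f$ on the discrete set $\calP$. Once existence is secured, Proposition~\ref{prop:partition_to_homotopy}(b) supplies uniqueness immediately, and the proposition follows.
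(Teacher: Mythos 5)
Your proposal follows essentially the same route as the paper: identify the cut with a partition of $\calP$, invoke Lemma~\ref{lemma:partition_to_curve} to get a separating curve, and appeal to Proposition~\ref{prop:partition_to_homotopy} to pass from the partition to a homotopy class. The one place you go beyond the paper is the existence step you flag as the main obstacle: the paper's proof silently treats the zero locus of the polynomial from Lemma~\ref{lemma:partition_to_curve} as if it were already a path from $s$ to $t$, whereas you correctly note that one must still construct an actual path realizing the prescribed partition before Proposition~\ref{prop:partition_to_homotopy}(b) can deliver uniqueness --- that is a gap in the paper's own argument rather than in yours, and your sketch of how to close it (finiteness of $\calP$, path-connectedness of the complement, routing through the exterior of the convex hull) is the right kind of repair.
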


\begin{proof}
Let a cut of $\triangulation$ partition $\calP$ into two disjoint subsets $A$ and $B$. Let $\calP^+ = A$ and $\calP^- = B$, according to Lemma \ref{lemma:partition_to_curve}, we can find a curve parameterized by $f(x, y) = 0$ that gives the partition $\calP = (\calP^+, \calP^-)$. Because of proposition \ref{prop:partition_to_homotopy}, the partition is equivalent to a homotopy class of paths. Therefore, a cut of $\triangulation$ uniquely determines a homotopy class.
\end{proof}

\begin{defi}[Face of $\triangulation$]
A \emph{face} of $\triangulation$ is any region bounded by edges, as illustrated in Fig.~\ref{fig:8}.
\end{defi}

\begin{defi}[Dual graph $\dualgraph$ on $\triangulation$]
The dual graph $\dualgraph = (\dualv,\duale)$ has a vertex for each face of $\triangulation$ and an edge connecting two vertices if the faces in $\triangulation$ are adjacent. Note that there is a vertex $v_{out}$ representing the outer face of the convex hull of $\calP$.
\end{defi}
Fig.~\ref{fig:11} gives an example of a dual graph $\dualgraph $ on $\triangulation$, where $\dualv$ and $\duale$ are shown in red circles and line segments. The blue circle shows the $v_{out}$. Note $s$ and $t$ are in the outer face represented by $v_{out}$ since they are outside of the convex hull of $\calP$.

\begin{defi}[A walk $w$ on $\dualgraph$]
Given $\dualgraph = (\dualv,\duale)$, a walk $w$ on an $\dualgraph$ is a finite alternating sequence of vertices and edges.  
\end{defi}

\begin{defi}[$s \mhyphen t$ cycle on $\dualgraph$]
An $s \mhyphen t$ cycle on $\dualgraph$ is a walk $w$ on $\dualgraph$ with the starting and ending vertex $v_{out}$ representing the outer face. Note that there is no repeated vertices on the walk except for the starting and ending vertex.
\end{defi}

Fig. \ref{fig:9} gives an example of $s \mhyphen t$ on $\dualgraph$ starting and ending at $v_{out}$.

\begin{defi}[Channel]\label{def:channel}
A channel is a series of triangles uniquely determined by a loopless walk on $\dualgraph$.
\end{defi}

Fig. \ref{fig:12} gives an example of a channel determined by a walk on $\dualgraph$. Note that the starting and ending triangles are both represented by $v_{out}$.

\begin{figure}[t]
	\centering
	\subfloat[]{\includegraphics[width=0.35\linewidth]{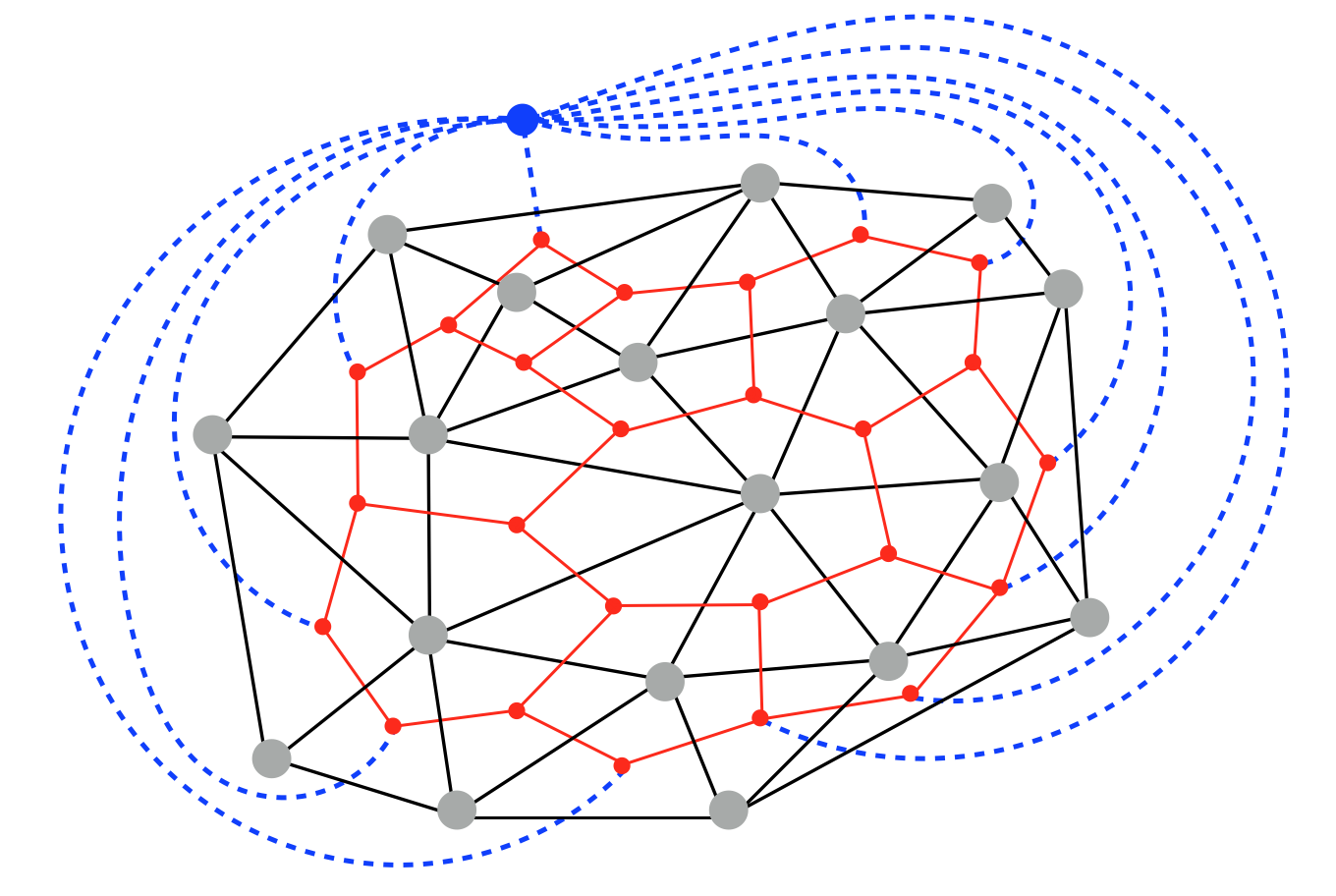}\label{fig:11}}
  \subfloat[]{\includegraphics[width=0.35\linewidth]{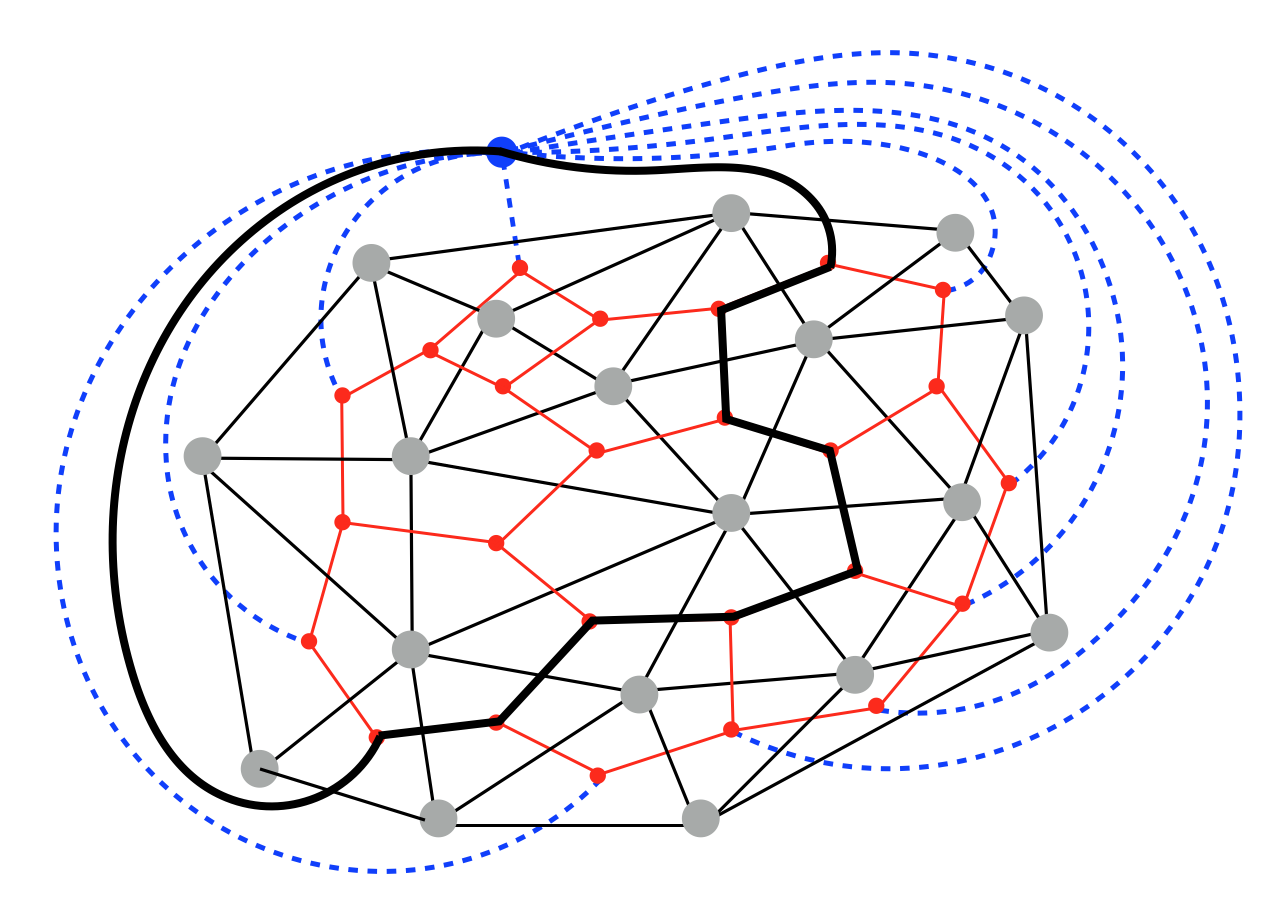}\label{fig:9}}
  \subfloat[]{\includegraphics[width=0.3\linewidth]{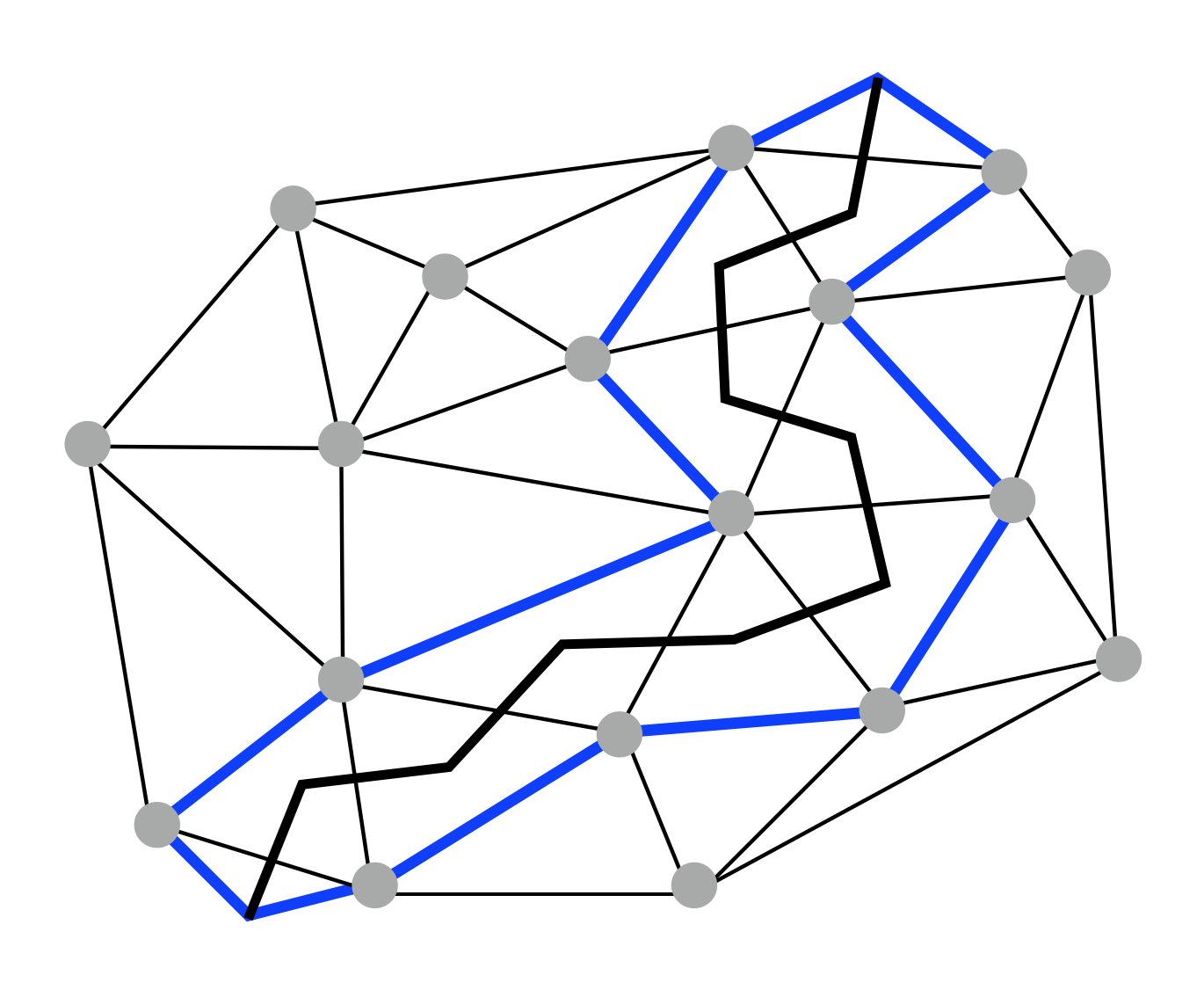}\label{fig:12}}
	\caption{(a) Dual graph in red.  (b) A loopy walk through $\dualgraph$.  (c)  A channel (interior of blue lines).}
	\label{fig:11912}
	\vspace{-15pt}
\end{figure}

\begin{proposition}\label{prop:cycle_to_homotopy}
An $s \mhyphen t$ cycle $\dualgraph$ uniquely determines a homotopy class of a path through $\calP$.
\end{proposition}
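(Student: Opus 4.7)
The plan is to reduce Proposition~\ref{prop:cycle_to_homotopy} to Proposition~\ref{prop:cut_to_homotopy} by showing that every $s\mhyphen t$ cycle on $\dualgraph$ canonically induces a cut of $\triangulation$; the homotopy class associated with the cycle is then the one uniquely determined by that cut.

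First, I would realize the cycle geometrically in the standard planar embedding of $\dualgraph$. Place each dual vertex corresponding to a triangular face at an interior point of that face, place $v_{out}$ at some point outside the convex hull of $\calP$, and draw each dual edge as a simple arc that crosses the corresponding primal edge of $\triangulation$ at exactly one interior point and meets no vertex of $\calP$. Because the $s\mhyphen t$ cycle has no repeated vertex other than its endpoint $v_{out}$, and because $\dualgraph$ is a planar graph, this realization produces a simple closed curve $\gamma$ in $\reals^2$ that is disjoint from $\calP$. By the Jordan curve theorem, $\gamma$ partitions $\reals^2\setminus\gamma$ into a bounded interior region and an unbounded exterior region, and since no point of $\calP$ lies on $\gamma$, this induces a partition $\calP = A \sqcup B$ according to which region each vertex falls in. By Definition~\ref{def:cut}, $(A,B)$ is a cut of $\triangulation$. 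Invoking Proposition~\ref{prop:cut_to_homotopy} on this cut then yields a unique homotopy class of paths through $\calP$, and this class is declared to be the one determined by the cycle.

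The main obstacle will be justifying that the geometric realization is genuinely a simple closed curve rather than a self-intersecting one. I would handle this by noting that a vertex-simple walk in $\dualgraph$ cannot reuse an edge (reusing an edge would force revisiting one of its endpoints, contradicting vertex-simplicity), and that in the canonical planar embedding above, two distinct dual edges can meet only at their shared dual vertex, which lies in the interior of a face of $\triangulation$ and therefore well away from $\calP$. A secondary point is well-definedness of the construction: any two choices of interior representative points and connecting arcs yield curves that are ambient-isotopic rel.\ $\calP$, so they produce the same partition of $V$ and hence the same cut and the same homotopy class. I would include this check briefly to ensure the map from $s\mhyphen t$ cycles to homotopy classes depends only on the combinatorial data of the cycle, completing the argument.
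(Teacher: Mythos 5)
Your proposal is correct and follows essentially the same route as the paper: the paper's proof simply invokes planar cycle--cut duality to obtain a cut of $\triangulation$ from the $s\mhyphen t$ cycle and then applies Proposition~\ref{prop:cut_to_homotopy}, exactly as you do. The only difference is that you spell out the duality step explicitly (geometric realization as a simple closed curve avoiding $\calP$, Jordan curve theorem, induced partition of $V$), which the paper asserts without justification.
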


\begin{proof}
Because of the duality, a cycle in $\dualgraph$ corresponds to a cut in $\triangulation$ and vice versa. According to Proposition \ref{prop:cut_to_homotopy}, an $s \mhyphen t$ cycle on $\dualgraph$ uniquely determines a homotopy class of paths through $\calP$.
\end{proof}

\begin{theorem}\label{theorem:channel_to_homotopy}
A channel on $\triangulation$ uniquely determines a homotopy class of a path through $\calP$.
\end{theorem}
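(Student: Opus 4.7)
The plan is to reduce this theorem directly to Proposition~\ref{prop:cycle_to_homotopy}, by showing that every channel corresponds to an $s$-$t$ cycle on $\dualgraph$. The bulk of the work has already been done: Proposition~\ref{prop:cut_to_homotopy} converts cuts of $\triangulation$ to homotopy classes via Lemma~\ref{lemma:partition_to_curve} and Proposition~\ref{prop:partition_to_homotopy}, and Proposition~\ref{prop:cycle_to_homotopy} transfers this from triangulation cuts to cycles on the dual graph by planar duality. So the only genuinely new observation needed is that ``channels'' are the right geometric packaging of ``$s$-$t$ cycles.''

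First, I would recall the standing assumption that both $s$ and $t$ lie outside the convex hull of $\calP$, so they both belong to the unbounded face represented by $v_{out}$. Any continuous path $\bff$ from $s$ to $t$ therefore originates in the region associated with $v_{out}$ and terminates in the region associated with $v_{out}$. Consequently, the sequence of triangular faces that $\bff$ successively visits, read off as a walk on $\dualgraph$, necessarily begins and ends at $v_{out}$. This matches exactly the starting and ending condition in the definition of an $s$-$t$ cycle.

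Next, I would unpack the word ``loopless'' in Definition~\ref{def:channel}: a loopless walk on $\dualgraph$ is one that does not repeat vertices along its interior. Together with the endpoint condition derived above, a loopless walk from $v_{out}$ back to $v_{out}$ is precisely an $s$-$t$ cycle in the sense of the earlier definition. Hence the channel, viewed as the ordered series of triangles traversed by such a walk (with the outer face playing the role of the entry and exit triangle), is in bijection with its underlying $s$-$t$ cycle; the series of triangles and the walk carry exactly the same information.

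Finally I would invoke Proposition~\ref{prop:cycle_to_homotopy} to conclude: the associated $s$-$t$ cycle determines a unique cut of $\triangulation$ by duality, which through Proposition~\ref{prop:cut_to_homotopy} (and hence Proposition~\ref{prop:partition_to_homotopy} and Lemma~\ref{lemma:partition_to_curve}) determines a unique homotopy class of paths through $\calP$. The main subtlety, and the step I expect to require the most care, is the first one: justifying rigorously that a geometric channel (a union of triangles in $\reals^2$) and its defining walk on $\dualgraph$ are interchangeable in such a way that the mapping ``channel $\mapsto$ homotopy class'' is well defined and independent of the combinatorial representative chosen for the walk.
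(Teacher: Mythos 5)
Your proposal is correct and follows essentially the same route as the paper, whose proof is simply the one-line reduction ``By Definition~\ref{def:channel} and Proposition~\ref{prop:cycle_to_homotopy}.'' You merely spell out the intermediate step (that a channel's defining loopless walk from $v_{out}$ back to $v_{out}$ is exactly an $s$-$t$ cycle) that the paper leaves implicit.
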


\begin{proof}
By Definition \ref{def:channel} and Proposition \ref{prop:cycle_to_homotopy}.
\end{proof}

\begin{theorem}
\label{thrm:paths-and-walks}
Every path through $\calP$ uniquely determines a walk on $\dualgraph$
\end{theorem}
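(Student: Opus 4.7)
The plan is to read off the walk directly from the sequence of faces of $\triangulation$ that the path $\bff$ enters as its parameter runs from $0$ to $1$. Since $s$ and $t$ lie outside the convex hull of $\calP$, both $\bff(0)$ and $\bff(1)$ belong to the outer face, corresponding to the vertex $v_{out} \in \dualv$. The goal is to exhibit a sequence of faces $F_0, F_1, \ldots, F_n$ (with $F_0 = F_n = v_{out}$) such that consecutive faces are separated by a common edge of $\triangulation$; this is precisely the data of a walk on $\dualgraph$.

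To construct the sequence, I would first put $\bff$ in general position with respect to $\triangulation$: arrange that $\bff$ crosses the edges of $\triangulation$ transversely at finitely many parameter values $0 < t_1 < \cdots < t_n < 1$ and never meets a vertex. This is permissible because $\calP$ is a finite set of points (hence avoidable) and the edge skeleton is a finite union of line segments, so transversality is generic; any non-general-position path can be perturbed within its homotopy class in $\reals^2 \setminus \calP$. For each consecutive pair $t_i < t_{i+1}$ (with $t_0 := 0$ and $t_{n+1} := 1$), the image $\bff((t_i, t_{i+1}))$ is connected and disjoint from the edge skeleton, so it lies in a single open face $F_i$ of $\triangulation$. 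At each $t_i$ with $1 \le i \le n$, the path crosses a unique edge $e_i$, and this edge is precisely the shared boundary between $F_{i-1}$ and $F_i$. Hence the dual of $e_i$ is an edge in $\duale$ joining $F_{i-1}$ to $F_i$, so the alternating sequence $F_0, e_1, F_1, e_2, \ldots, e_n, F_n$ is a walk on $\dualgraph$.

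For uniqueness, I would invoke the earlier results relating homotopy to combinatorial data. Two general-position perturbations of $\bff$ that are homotopic in $\reals^2 \setminus \calP$ induce the same partition of $\calP$ by Proposition~\ref{prop:partition_to_homotopy} and therefore, via the bijection between cuts of $\triangulation$ and homotopy classes, yield the same walk up to backtracking (Proposition~\ref{prop:cycle_to_homotopy}). The hard part is the non-general-position case: a path may run along an edge of $\triangulation$ for a whole sub-interval or kiss an edge tangentially without transversally crossing, both of which complicate a naive face-sequence definition. I would address this by defining the walk canonically from $\bff$ in terms of the maximal open subintervals of $\bff^{-1}(\reals^2 \setminus \mathrm{skel}(\triangulation))$, each of which maps into a single face of $\triangulation$ by connectedness, and verifying that this intrinsic construction agrees with the perturbation construction on their overlap. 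Ensuring finiteness of the resulting sequence requires a mild regularity hypothesis on $\bff$ (e.g.\ only finitely many such maximal intervals), which is natural for the piecewise-smooth paths actually used by the planner.
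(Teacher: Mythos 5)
Your construction---record the transverse crossings of $\bff$ with the edges of $\triangulation$, send each crossing to the corresponding dual edge and each intermediate arc to the face containing it---is the same construction the paper uses, and your treatment of degenerate paths (sub-arcs lying along an edge, tangential touches, finiteness of the crossing set) is if anything more careful than the paper's, which silently assumes finitely many intersection points. The genuine gap is in the uniqueness claim. You concede that two general-position perturbations of $\bff$ yield ``the same walk up to backtracking,'' but the theorem asserts that $\bff$ determines a \emph{unique} walk, and a walk that crosses an edge and immediately crosses back is a different walk from one that never touches that edge. Appealing to Proposition~\ref{prop:partition_to_homotopy} cannot close this: the partition of $\calP$ only remembers the homotopy class of the path, which is strictly coarser information than the face sequence, so it cannot select one walk from among the several that different perturbations of the same path can produce. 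As written, your assignment path~$\mapsto$~walk depends on the choice of perturbation.

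The paper closes exactly this hole with an explicit cancellation step that your proposal is missing. If $\bff$ meets the same edge $e$ at two consecutive intersection points $i_j$ and $i_{j+1}$, then the sub-arc of $\bff$ between them together with the segment of $e$ between them bounds a region containing no point of $\calP$---this is where the maximality of the triangulation is used, since a point of $\calP$ inside that region would be joined by edges to the rest of $\calP$ and would force further crossings between $i_j$ and $i_{j+1}$. Hence the two crossings can be cancelled without changing the homotopy class, each maximal run of $m$ consecutive crossings of one edge reduces to $m \bmod 2$ crossings, and the walk is read off the reduced sequence, in which consecutive crossings are with distinct edges. You need this reduction (or some other canonical normal form for the crossing sequence) to justify the word ``uniquely''; with it added, your argument matches the paper's.
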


\begin{proof}
Let $\bff : [0, 1] \rightarrow \mathbb{R}^2$ be a path intersecting the edges of $\triangulation$ at intersection points $\{i_0, i_1, \hdots, i_n\}$. Let $I_{e_i} = \{i_i, i_{i+1}, \hdots, i_{i+m-1}\}$ be $m$ consecutive intersecting points with the same edge $e_i$ (see Fig.~\ref{fig:13}). Given $||I_{e_i}|| = m$, $I_{e_i}$ can be reduced to $I'_{e_i}$, where $||I'_{e_i}|| = m\mod 2$ without changing the homotopy class of $\bff$. Namely, if $m$ is odd, then $I_{e_i}$ can be reduced to $I'_{e_i} = \{i_{i + j}\}, j\in[0, m-1]$ and if $m$ is even, $I_{e_i}$ can be reduced to $I'_{e_i} = \emptyset$. This is because that for any two consecutive intersection points $i_j$ and $i_{i+1}$ with the same edge, the line segment $e_{i_{i_j \rightarrow i_{j+1}}}$ and path segment $\bff_{i_{i_{j+1} \rightarrow i_j}}$ forms a loop that contains no vertices from $\calP$ inside, otherwise intersections with other edges will occur between $i_j$ and $i_{j+1}$ because vertices are all connected. Therefore, a path $\bff$ with the set of intersection points given by $I_{\bff}$ is homotopically equivalent to $\bff'$ with $I_{\bff'} = I_{\bff} - \{i_j, i_{j+1}\}$. Thus the reduction of every $I_{e_i}$ described above is legitimate. An example is illustrated by Fig. \ref{fig:1314}. In Fig. \ref{fig:13} when $m$ is even, the path is equivalent to never intersecting with edge $e_i$. In Fig. \ref{fig:14}, when $m$ is odd, the path is equivalent to only intersecting with edge $e_i$ once.

As shown above, a path $\bff$ can be reduced to $\bff'$ such that every two consecutive intersections of $\bff'$ are with distinct edges of $\triangulation$. For each intersection points $i_j \in I_{\bff}$, an edge in $\dualgraph$ can be uniquely determined. For every two consecutive intersection points $i_j, i_{j+1} \in I_{\bff}$, the triangle that both edges belong to (a vertex in $\dualgraph$) can be uniquely determined. Therefore, consecutive edges of $\dualgraph$ given by $I_{\bff}$ are connected with each other, which forms unique a walk on $\dualgraph$.
\end{proof}

\begin{figure}[!ht]
	\centering
	\subfloat[$m$ is even.]{\includegraphics[width=0.5\linewidth]{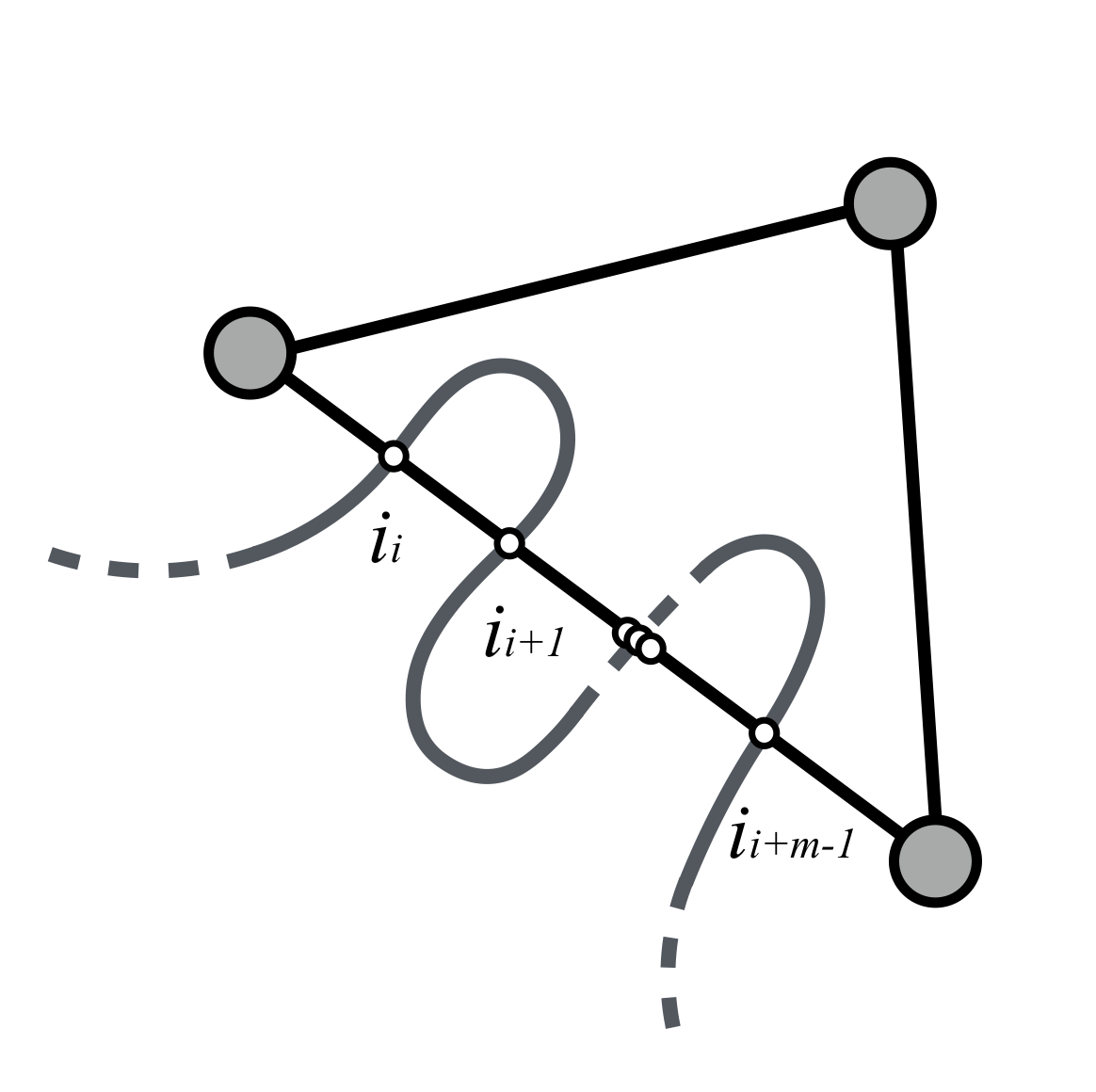}\label{fig:13}}
  \subfloat[$m$ is odd.]{\includegraphics[width=0.5\linewidth]{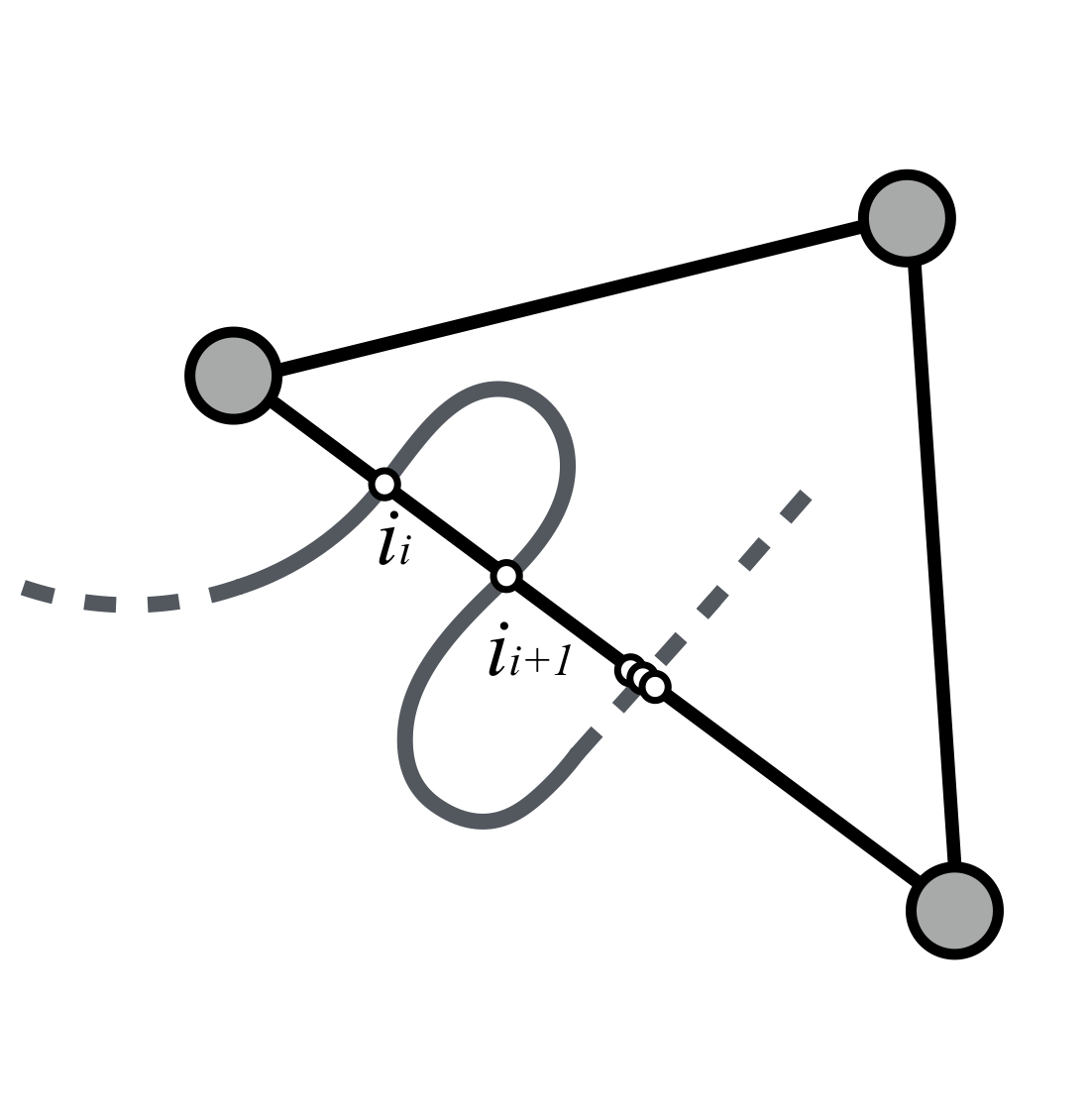}\label{fig:14}}
  	\caption{Illustration for Theorem~\ref{thrm:paths-and-walks}}
	\label{fig:1314}
	\vspace{-15pt}
\end{figure}

\bibliographystyle{IEEEtran}
\bibliography{icra2019}

\begin{thebibliography}{10}
\providecommand{\url}[1]{#1}
\csname url@rmstyle\endcsname
\providecommand{\newblock}{\relax}
\providecommand{\bibinfo}[2]{#2}
\providecommand\BIBentrySTDinterwordspacing{\spaceskip=0pt\relax}
\providecommand\BIBentryALTinterwordstretchfactor{4}
\providecommand\BIBentryALTinterwordspacing{\spaceskip=\fontdimen2\font plus
\BIBentryALTinterwordstretchfactor\fontdimen3\font minus
  \fontdimen4\font\relax}
\providecommand\BIBforeignlanguage[2]{{%
\expandafter\ifx\csname l@#1\endcsname\relax
\typeout{** WARNING: IEEEtran.bst: No hyphenation pattern has been}%
\typeout{** loaded for the language `#1'. Using the pattern for}%
\typeout{** the default language instead.}%
\else
\language=\csname l@#1\endcsname
\fi
#2}}

\bibitem{dynamic-window}
D.~Fox, W.~Burgard, and S.~Thrun, ``{The Dynamic Window Approach to Collision
  Avoidance},'' \emph{IEEE Robotics and Automation Magazine}, 1997.

\bibitem{ics}
T.~Fraichard and H.~Asama, ``{Inevitable Collision States: a Step Towards Safer
  Robots?}'' in \emph{ICIRS}, 2003.

\bibitem{vos}
P.~Fiorini and Z.~Shiller, ``{Motion Planning in Dynamic Environments using
  Velocity Obstacles},'' \emph{International Journal of Robotics Research},
  1998.

\bibitem{rvo}
J.~van~den Berg, S.~Guy, M.~Lin, and D.~Manocha, ``{Reciprocal n-body Collision
  Avoidance},'' in \emph{ISRR}, 2009.

\bibitem{VanDenBerg2010}
J.~{Van Den Berg}, S.~J. Guy, M.~Lin, and D.~Manocha, ``{Optimal Reciprocal
  Collision Avoidance for Multi-Agent Navigation},'' p.~8, 2010.

\bibitem{snape-hrvo}
J.~Snape, J.~van~den Berg, S.~Guy, and D.~Manocha, ``{The hybrid reciprocal
  velocity obstacle},'' \emph{IEEE Transactions on Robotics}, 2011.

\bibitem{VanDenBerg2011}
J.~{Van Den Berg}, J.~Snape, S.~J. Guy, and D.~Manocha, ``{Reciprocal collision
  avoidance with acceleration-velocity obstacles},'' \emph{IEEE International
  Conference on Robotics and Automation}, 2011.

\bibitem{Kuffner2000}
J.~Kuffner and S.~LaValle, ``{RRT-connect: An efficient approach to
  single-query path planning},'' \emph{IEEE International Conference on
  Robotics and Automation.}, 2000.

\bibitem{Kavraki1996}
L.~Kavraki, L.~Kavraki, P.~Svestka, P.~Svestka, J.-C. Latombe, J.-C. Latombe,
  and M.~Overmars, ``{Probabilistic roadmaps for path planning in
  high-dimensionalconfiguration spaces},'' \emph{Robotics and Automation},
  1996.

\bibitem{Kavraki1998}
L.~E. Kavraki, M.~N. Kolountzakis, and J.-c. Latombe, ``{Analysis Of
  Probabilistic Roadmaps For Path Planning},'' \emph{IEEE Transactions on
  Robotics and automation}.

\bibitem{Zucker2007}
M.~Zucker, J.~Kuffner, and M.~Branicky, ``{Multipartite RRTs for rapid
  replanning in dynamic environments},'' \emph{IEEE International Conference on
  Robotics and Automation}, 2007.

\bibitem{Connell2017}
\BIBentryALTinterwordspacing
D.~Connell and H.~M. La, ``{Dynamic Path Planning and Replanning for Mobile
  Robots using RRT*},'' 2017. [Online]. Available:
  \url{http://arxiv.org/abs/1704.04585}
\BIBentrySTDinterwordspacing

\bibitem{rrt*}
S.~Karaman and E.~Frazzoli, ``{Sampling-based algorithms for optimal motion
  planning},'' in \emph{IJRR}, 2011.

\bibitem{JurP.vandenBergandMarkH.Overmars2005}
{Jur P. van den Berg and Mark H. Overmars}, ``{Roadmap-Based Motion Planning in
  Dynamic Environments},'' \emph{Transactions on Robotics}, 2005.

\bibitem{VanDenBerg2007}
J.~{Van Den Berg} and M.~Overmars, ``{Kinodynamic motion planning on roadmaps
  in dynamic environments},'' \emph{IEEE International Conference on
  Intelligent Robots and Systems}, 2007.

\bibitem{svenstrup-human}
M.~Svenstrup, T.~Bak, and J.~Andersen, ``{Trajectory Planning for Robots in
  Dynamic Human Environments},'' in \emph{IROS}, 2010.

\bibitem{Quinlan1993}
S.~Quinlan and O.~Khatib, ``{Elastic bands: connecting path planning and
  control},'' \emph{[1993] Proceedings IEEE International Conference on
  Robotics and Automation}, pp. 802--807, 1993.

\bibitem{Rosmann2013}
C.~R{\"{o}}smann, W.~Feiten, T.~W{\"{o}}sch, F.~Hoffmann, and T.~Bertram,
  ``{Efficient trajectory optimization using a sparse model},'' \emph{2013
  European Conference on Mobile Robots}.

\bibitem{Lau2009}
B.~Lau, C.~Sprunk, and W.~Burgard, ``{Kinodynamic motion planning for mobile
  robots using splines},'' \emph{2009 IEEE/RSJ International Conference on
  Intelligent Robots and Systems}, pp. 2427--2433, 2009.

\bibitem{Choset2000}
H.~Choset and J.~Burdick, ``{Sensor-based exploration: The hierarchical
  generalized Voronoi graph},'' \emph{International Journal of Robotics
  Research}, vol.~19, no.~2, pp. 96--125, 2000.

\bibitem{Garber2004}
M.~Garber and M.~Lin, ``{Constraint-based motion planning using voronoi
  diagrams},'' \emph{Algorithmic Foundations of Robotics V}, 2004.

\bibitem{Banerjee2013}
B.~Banerjee and B.~Chandrasekaran, ``{A framework of Voronoi diagram for
  planning multiple paths in free space},'' \emph{Journal of Experimental {\&}
  Theoretical Artificial Intelligence}, 2013.

\bibitem{Kuderer2014}
M.~Kuderer, C.~Sprunk, H.~Kretzschmar, and W.~Burgard, ``{Online generation of
  homotopically distinct navigation paths},'' \emph{IEEE International
  Conference on Robotics and Automation}, 2014.

\bibitem{Kallmann2005}
M.~Kallmann, ``{Path planning in triangulations},'' \emph{International Joint
  Conference on Artificial Intelligence}, pp. 49--54, 2005.

\bibitem{Broz2014}
P.~Bro{\v{z}}, M.~Zemek, I.~Kolingerov{\'{a}}, and J.~Szkandera, ``{Dynamic
  Path Planning With Regular Triangulations},'' vol.~24, no.~3, pp. 119--142,
  2014.

\bibitem{Demyen2007}
D.~Demyen and M.~Buro, ``{Efficient triangulation-based pathfinding},''
  \emph{Proceedings of the 21st National Conference on Artificial
  Intelligence}, vol.~6, pp. 942--947, 2007.

\bibitem{Yan2008}
H.~Yan, H.~Wang, Y.~Chen, and G.~Dai, ``{Path planning based on Constrained
  Delaunay Triangulation},'' \emph{Intelligent Control and Automation, 2008.
  WCICA 2008. 7th World Congress on}, pp. 5168--5173, 2008.

\bibitem{Chen2010}
J.~Chen, C.~Luo, M.~Krishnan, M.~Paulik, and Y.~Tang, ``{An enhanced dynamic
  Delaunay triangulation-based path planning algorithm for autonomous mobile
  robot navigation},'' 2010.

\bibitem{crowd-nav-survey}
T.~Kruse, R.~Alami, A.~K. Pandey, and A.~Kirsch, ``{Human-aware robot
  navigation: a Survey},'' in \emph{RAS}, 2013.

\bibitem{Trautman2010}
P.~Trautman and A.~Krause, ``{Unfreezing the robot: Navigation in dense,
  interacting crowds},'' \emph{IEEE International Conference on Intelligent
  Robots and Systems}, 2010.

\bibitem{Muller2008}
J.~M{\"{u}}ller, C.~Stachniss, K.~O. Arras, and W.~Burgard, ``{Socially
  Inspired Motion Planning for Mobile Robots in Populated Environments},''
  \emph{Computer}, 2008.

\bibitem{kudereriros2013}
M.~Kuderer, H.~Kretzschmar, and W.~Burgard, ``{Teaching Mobile Robots to
  Cooperatively Navigate in Populated Environments},'' in \emph{IROS}, 2013.

\bibitem{learning-with-braids}
C.~I. Mavrogiannis, V.~Blukis, and R.~A. Knepper, ``{Socially Competent
  Navigation Planning by Deep Learning of Multi-Agent Path Topologies},'' in
  \emph{IROS}, 2017.

\bibitem{learning-social-robot}
M.~Luber, L.~Spinello, J.~Silva, and K.~O. Arras, ``{Socially-Aware Robot
  Navigation: A Learning Approach},'' in \emph{IROS}, 2012.

\bibitem{irlnavigate}
P.~Henry, C.~Vollmer, B.~Ferris, and D.~Fox, ``{Learning to Navigate Through
  Crowded Environments},'' in \emph{ICRA}, 2010.

\bibitem{crowd-nav-deep-learning-mit}
Y.~F. Chen, M.~Everett, M.~Liu, and J.~P. How, ``{Socially Aware Motion
  Planning with Deep Reinforcement Learning},'' in \emph{IROS}, 2017.

\bibitem{Chazelle1982}
B.~Chazelle, ``{A theorem on polygon cutting with applications},'' in
  \emph{23rd Annual Symposium on Foundations of Computer Science (sfcs 1982)},
  1982.

\bibitem{Lee1984}
D.~T. Lee and F.~P. Preparata, ``{Euclidean shortest paths in the presence of
  rectilinear barriers},'' \emph{Networks}, vol.~14, no.~3, pp. 393--410, 1984.

\bibitem{Hershberger1991}
J.~Hershberger and J.~Snoeyink, ``{Computing minimum length paths of a given
  homotopy class},'' \emph{Lecture Notes in Computer Science}, 1991.

\bibitem{coulter1992implementation}
R.~C. Coulter, ``Implementation of the pure pursuit path tracking algorithm,''
  Carnegie-Mellon UNIV Pittsburgh PA Robotics INST, Tech. Rep., 1992.

\bibitem{walkalone}
S.~Pellegrini, A.~Ess, K.~Schindler, and L.~{Van Gool}, ``{You'll never walk
  alone: modeling social behavior for multi-target tracking},'' in
  \emph{International Conference on Computer Vision}, 2009.

\bibitem{Lerner2007}
A.~Lerner, Y.~Chrysanthou, and D.~Lischinski, ``{Crowds by example},''
  \emph{Computer Graphics Forum}, vol.~26, no.~3, pp. 655--664, 2007.

\bibitem{Wilkie2009}
D.~Wilkie, J.~{Van Den Berg}, and D.~Manocha, ``{Generalized velocity
  obstacles},'' \emph{2009 IEEE/RSJ International Conference on Intelligent
  Robots and Systems, IROS 2009}, no. June 2014, pp. 5573--5578, 2009.

\end{thebibliography}

\end{document}